\def\eqref#1{equation~\ref{#1}}
\def\1{\bm{1}}
\DeclareMathAlphabet{\mathsfit}{\encodingdefault}{\sfdefault}{m}{sl}
\SetMathAlphabet{\mathsfit}{bold}{\encodingdefault}{\sfdefault}{bx}{n}
\newcommand*{\rom}[1]{\expandafter\@slowromancap\romannumeral #1@}
\definecolor{tea}{RGB}{0,110,84}
\newif\ifmodify
\newcommand{\cross}[1]{\textcolor{red}{\sout{#1}}}
\newcommand{\xl}[1]{\textcolor{teal}{#1}}
\newcommand{\f}[1]{\textcolor{purple}{#1}}
\newcommand{\dl}[1]{\textcolor{red}{\sout{#1}}}
\newcommand{\cross}[1]{}
\newcommand{\xl}[1]{#1}
\newcommand{\f}[1]{#1}
\newcommand{\dl}[1]{}
\newtheorem{proposition}{Proposition}
\newtheorem{assumption}{Assumption}
\newtheorem{lemma}{Lemma}
\newtheorem{remark}{Remark}
\newtheorem{theorem}{Theorem}
\renewcommand\footnotemark{}
\title{Effective Model Sparsification by Scheduled Grow-and-Prune Methods}
\author{Xiaolong Ma\text{$^{1,\dagger}$}\thanks{$\dagger$ Equal Contribution.}\thanks{This work is partially done during Xiaolong Ma's internship and Yi Xu's working period at Alibaba Group.}, Minghai Qin\text{$^{2,\dagger}$}, Fei Sun\text{$^{2,\dagger}$}, Zejiang Hou\text{$^{3}$}, Kun Yuan\text{$^{2}$}, Yi Xu\text{$^{4}$}, \vspace{0.3em} \\ 
\textbf{Yanzhi Wang\text{$^{1}$},} \textbf{Yen-Kuang Chen\text{$^{2}$},} \textbf{Rong Jin\text{$^{2}$},} \textbf{Yuan Xie\text{$^{2}$}} \vspace{0.3em} \\ 
\text{$^{1}$} Northeastern University 
\text{$^{2}$} DAMO Academy, Alibaba Group 
\text{$^{3}$} Princeton University \vspace{0.2em} \\
\text{$^{4}$} Dalian University of Technology
}
\begin{document}

\maketitle

\begin{abstract}
Deep neural networks (DNNs) are effective in solving many real-world problems. Larger DNN models usually exhibit better quality (e.g., accuracy) but their excessive computation results in long inference time. 
Model sparsification can reduce the computation and memory cost while maintaining model quality. 
Most existing sparsification algorithms unidirectionally remove weights, while others  randomly or greedily explore a small subset of weights in each layer for pruning.
The limitations of these algorithms reduce the level of achievable sparsity.
In addition, many algorithms still require pre-trained dense models and thus suffer from large memory footprint.
In this paper, we propose a novel scheduled grow-and-prune (GaP) methodology without having to pre-train a dense model. 
It addresses the shortcomings of the previous works by repeatedly growing  a subset of layers to dense and then pruning them back to sparse after some training.
Experiments show that the models pruned using the proposed methods match or beat the quality of the highly optimized dense models at 80\% sparsity on a variety of tasks, such as image classification, objective detection, 3D object part segmentation, and translation. They also outperform other state-of-the-art (SOTA) methods for model sparsification.
As an example, a 90\% non-uniform sparse ResNet-50 model obtained via  GaP achieves 77.9\% top-1 accuracy on ImageNet, improving the previous SOTA results by 1.5\%.
Code available at: \texttt{\url{https://github.com/boone891214/GaP}}.


\end{abstract}

\section{Introduction}

Deep neural networks (DNNs) have achieved great performance in many real-world scenarios.
However, the large computation and memory requirements of deep neural networks  discourage them from being applied to broader applications  on resource-limited devices. 
Model compression via weight pruning is a popular research topic, where a large proportion of weights are set to zero, 
\textcolor{black}{leading to significant reduction in both memory and computation}.
The introduction of sparse tensor cores in the NVIDIA A100 GPU~\citep{nvidia2020a100} brings weight pruning into mainstream.

Early works on weight pruning generally follow a~\textbf{prune-from-dense} methodology~\citep{guo2016dynamic,yu2018nisp,wen2016learning,liu2018rethinking,pham2018efficient,real2019regularized}, which usually requires 3 phases of training: pre-train a dense model, prune it to sparse, and fine-tune it. 
In such methodologies, the one-shot or iteratively pruning from a well-trained DNN can only {\it remove} weights, which lacks the flexibility of growing back weights that are considered unimportant early in the training process but showed to be significant later in training.
On the other hand, \textbf{early-stage pruning} methods~\citep{lee2018snip,wang2019picking,you2020drawing} avoid training dense models to converge. 
However, those sparse masks (i.e., the binary matrix in which a zero value removes the corresponding weight entry from the model) are prematurely fixed, resulting in inferior model quality.
Methods based on~\textbf{sparse mask exploration}, such as DeepR~\citep{bellec2018deep}, SET~\citep{mocanu2018scalable}, DSR~\citep{mostafa2019parameter}, and RigL~\citep{evci2020rigging} maintain the target sparsity in all layers throughout the  training process and selectively explore a small fraction of the weights periodically. 
The explored weights are chosen based on random or greedy heuristics, leading to limited coverage of sparse patterns and consequentially sub-optimal model quality.




In order to overcome the shortcomings of the previous solutions, we propose a scheduled grow-and-prune (GaP) methodology. It does not require a dense model any time during the training process, leading to largely-reduced memory footprints. Meanwhile, the sparse mask of a layer is updated after exploring all weights in the same layer, resulting in better mask-update efficiency. 
\f{It targets to obtain the best quality sparse model under a runtime budget. 
Since some \f{large} models may be \f{compressed below a target latency before being} deployed \f{in the wild} to thousands of devices and inferenced billions of times per day~\citep{hazelwood2018applied,wu2019machine},  spending a bit more time in training to find a better model is well rewarded. }


\f{The scheduled GaP methodology divides a DNN into several partitions composed of contiguous layers.
During model training, one of the sparse partitions is grown to dense while the rest remain sparse. After some epochs of training, the previously dense partition is pruned to sparse, and an alternate partition is grown to dense. This process is repeated so that all partitions are grown to dense and pruned back to sparse multiple times. 
If the  partitions are grown to dense one by one sequentially, it is called the {\it cyclic} GaP method (as shown in Figure~\ref{fig:seq_gap}). If the model is replicated to multiple machines with different dense partitions, it is called the {\it parallel} GaP method.  }

\begin{figure}[t]
    \centering
    \includegraphics[width=0.96\textwidth]{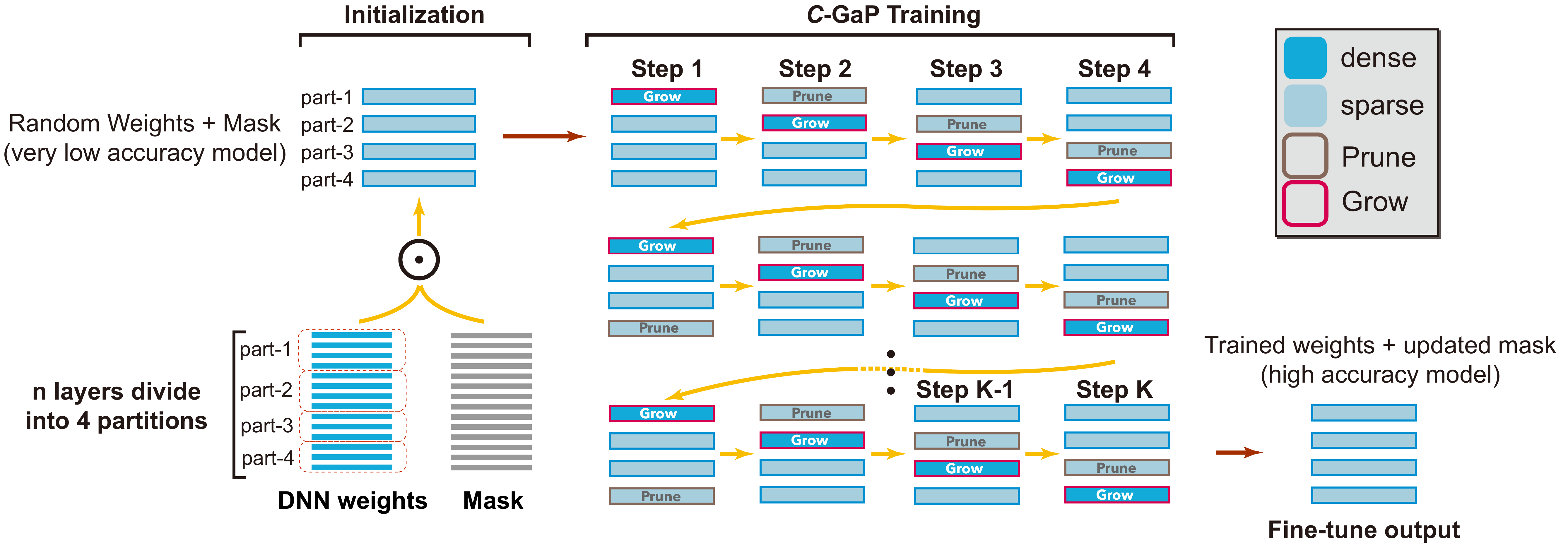}
    \small
    \caption{Overview of the cyclic GaP (\emph{C}-GaP) training method. We assume 4 partitions in a model  are grown and pruned in a cyclic order. Only one partition is kept dense during training. After $K$ steps, the dense partition is pruned and the whole model is fine-tuned to obtain the sparse model. }
    \label{fig:seq_gap}
\end{figure}

\f{The  GaP methods are carefully scheduled to explore all weights efficiently,} \xl{ which is not guaranteed in the existing mask exploration methods.
We illustrate their differences by an example in Figure~\ref{fig:compare_sche_rand}. In the scheduled GaP methods, all weights to be grown (or to be pruned) belong to the same partition. 
All weights in the model are explored when every partition is grown to dense and pruned to sparse.}
On the other hand, in the random or greedy exploration methods, the grown and pruned weights are distributed across all layers and they do not guarantee that all weights are explored, e.g, the connection between 1st input neuron and 3rd neuron in the first layer is not explored in Figure~\ref{fig:compare_sche_rand}(b).
\xl{Random sparse mask exploration and our scheduled GaP methods are like sampling operations with replacement and without replacement~\citep{basu1958sampling}. The former requires a lot more samples to achieve the same weight coverage as the latter. Please see Appendix~\ref{appdx:gap_random} for an illustrative example.}

\begin{figure}[t]
    \centering
    \includegraphics[width=0.95\textwidth]{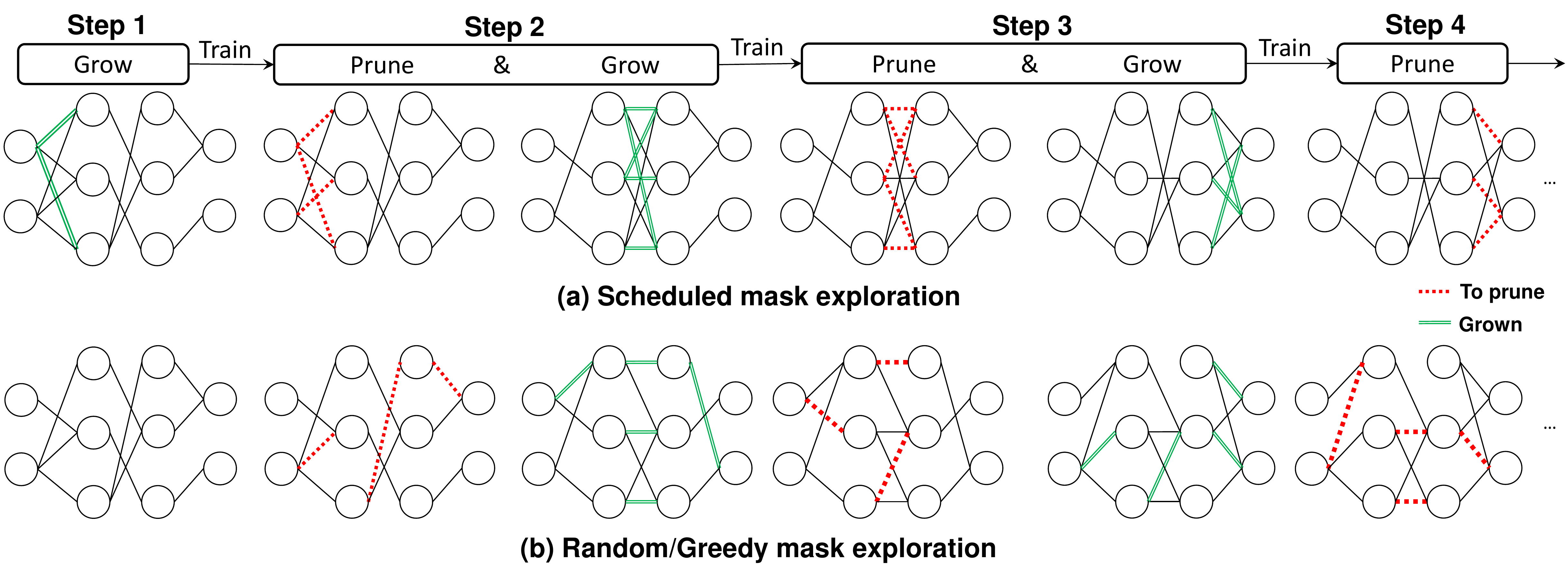}
    \small
    \caption{Comparison between (a) the scheduled mask exploration and (b) the random/greedy mask exploration. In (a), the weights to be grown (or pruned) are in the same layer and the growing {phase} covers all weights in that layer. In (b), they may be in different layers and the growing {phase} does not guarantee a coverage during training.}
    \label{fig:compare_sche_rand}
\end{figure}

The contribution of this paper is summarized as follows.

\begin{itemize}[leftmargin=*]

\item We propose two GaP methods such that they do not require to train dense models any time throughout the training process, reducing the training memory footprints. 

\item Sparse models obtained by the proposed GaP methods outperform the SOTA  sparsification algorithms on a wide variety of deep learning  tasks, including 2D image recognition, object detection, 3D object part segmentation, and machine translation.


\item For all four tasks, the proposed GaP methods surprisingly generate sparse models  that match or even exceed the dense counterparts at 80\% sparsity.

{\color{black}
\item {We provide theoretical guarantees for the GaP method and clarify its convergence property. In particular, we justify the influence of the mask-induced  error.}
}


\end{itemize}

\section{Methodology}

In this section, we describe the scheduled grow-and-prune (GaP) methods in detail. 
\f{The process starts from a randomly initialized model. First, its weights are pruned randomly to reach the target sparsity, i.e. a random sparse mask is applied to the weights. Then, the model is divided into $\kappa$ partitions and each partition is grown and pruned separately. In the following two sub-sections, two variants of the GaP methods are described in detail.}

\begin{algorithm}[t]
\begin{small}
\DontPrintSemicolon
  \KwInput{An $L$-layer model with uninitialized weight $\Theta$; pruning ratio $r$.}
  \KwOutput{An $L$-layer sparse model {satisfying} the target sparsity requirement.}
  Initialize $f(x; \Theta\odot m)$ with random weights $\Theta$ and  random masks $m$ that satisfy the sparsity requirement. \\
  Divide all layers into $\kappa$ partitions, denoted by $S_i, i\in \{0,1,\ldots, \kappa-1\}$. \\
  $step=0$ \\
  \While{$step < K$ }
   {
            
        Partition to grow index $i \leftarrow$ $step$ \texttt{mod} $\kappa$, partition to prune index $j \leftarrow$ $(step-1)$ \texttt{mod} $\kappa$. \\
         
        Prune $\Theta^{S_{j}}$ and 
            update $m^{S_{j}}$ by $[\Theta^{S_{j}}, m^{S_{j}}]\leftarrow$ \texttt{ArgPruneTo}($\Theta^{S_j}$, $r$). \\
        Grow $\Theta^{S_{i}}$ by $m^{S_i}\leftarrow$ \texttt{ArgGrowTo}($\Theta^{S_i}$). \\
       

   		Train the model $f(x; \Theta\odot m)$ for $T$ epochs. \\
   		$step = step + 1$. \\
   }
   Denote the final dense partition by $S_d$. \\
   Prune $\Theta^{S_{d}}$ and update $ m^{S_{d}}$ by $[\Theta^{S_{d}}, m^{S_{d}}]\leftarrow$ \texttt{ArgPruneTo}($\Theta^{S_d}$, $r$). \\
   Fine-tune $f(x; \Theta\odot m)$ for $T'$ epochs.
\end{small}
\caption{\emph{C}-GaP training flow.}
\label{algo:s_gap}
\end{algorithm}

\subsection{Cyclic GaP}

\f{As shown in Figure~\ref{fig:seq_gap}, the cyclic GaP (\emph{C}-GaP) method rotates the dense partitions among all  $\kappa$ partitions ($\kappa=4$ in Figure~\ref{fig:seq_gap}).} 
Starting from all partitions with random sparse masks, the first partition is grown to a dense one. All weights in the first partition and the masked weights in the remaining 3 partitions are trained for a few epochs (Step 1). 
Then, the first partition is pruned to sparse (Step 2). 
Next, we apply the same strategy to the second partition, then continue iterating over all $\kappa$ partitions. When all layers are cyclically grown and pruned once after $\kappa$ steps, we call it {\em one round}. This process is repeated $K$ times. 

\f{Algorithm~\ref{algo:s_gap} describes this process in more detail. In Line 1, we use $f(x; \Theta)$ to represent an $L$-layer deep neural network model, where $\Theta \in \mathbb{R}^M$ are $M$ trainable weights  and $x$ are training samples. 
A sparse model is denoted as $f(x; \Theta \odot m)$, where $\odot$ denotes  element-wise multiplication. $m$ is a 
 binary mask  $m \in {\{0,1\}}^{M}$, where a zero in $m$ means that the corresponding weight in $\Theta$ is fixed to be  zero and a one in $m$ means that the corresponding {weight} in $\Theta$ is free to be updated. 
}

\f{In the $C$-GaP, all layers of the model  $\{1,2,\ldots,L \}$ are divided into $\kappa$ subsets $S_0,S_1,\ldots,S_{\kappa - 1}$\f{, where $\kappa$ is an integer between 1 and $L$, as shown in Line 2}. Let $\Theta^S$ and $m^S$ denote the subset of trainable weights and their masks. Consequently, $\Theta$  is divided into $\kappa$ partitions, denoted by $\Theta^{S_0}, \ldots, \Theta^{S_{\kappa - 1}}$. For the $i$-th partition $\Theta^{S_i}$, we denote its mask by $m^{S_i}$. 
}

\f{Then the grow and prune process is iterated $K$ times (Line 4). First, the indices of the partitions to grow and prune are calculated (Line 5). They are determined in a cyclic order.
Then, the selected dense partition is pruned to sparse (Line 6). It prunes the partition $\Theta^{S_{j}}$ with a pruning ratio $r$. In the mean time, the next sparse partition is grown to dense (Line 7). It sets the mask $m^{S_{i}}$ to be all ones such that all weights $\Theta^{S_{i}}$ are free to be updated during training. 
After that, the model is trained for $T$ epochs (Line 8).
After the entire grow-and-prune process is finished, the final dense partition is pruned to sparse (Line 11) and the model is fine-tuned to get the final trained sparse model (Line 12).
}

\begin{figure}[t]
    \centering
    \includegraphics[width=0.99\textwidth]{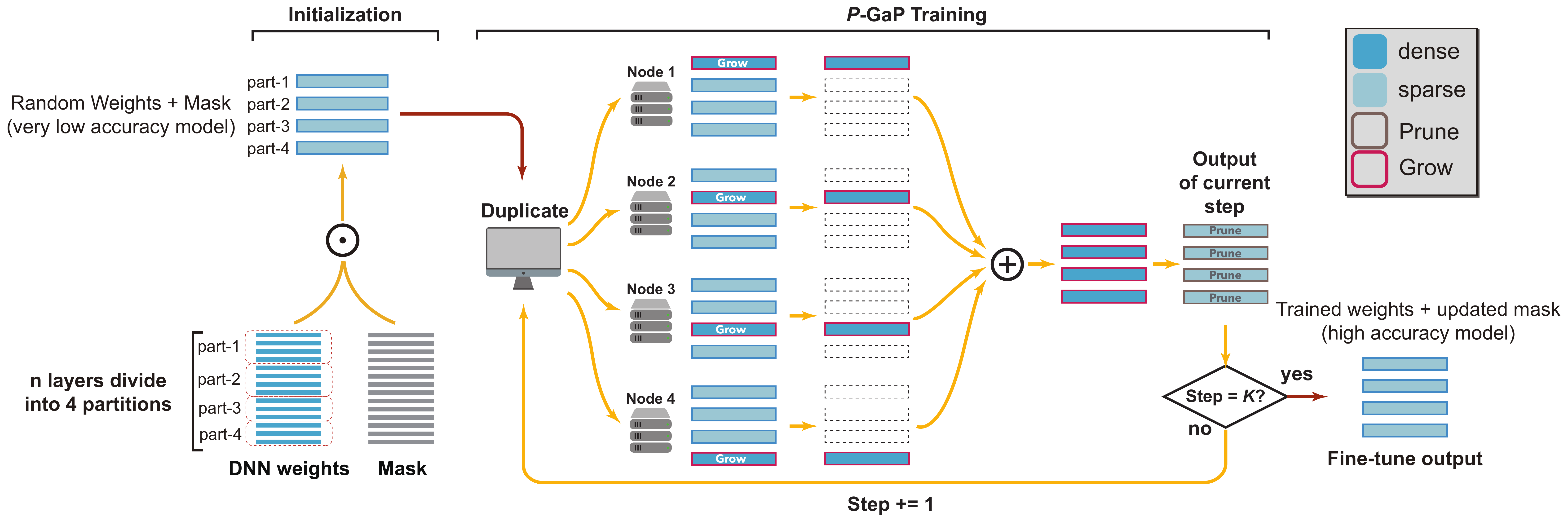}
    \caption{Overview of the parallel GaP (\emph{P}-GaP) training method. We assume 4 training nodes are used for the 4 partitions, which are grown in parallel for $T$ epochs. At the end of each step, the dense parts in the nodes are combined, followed by a pruning {back to a sparse model}. After $K$ steps, the model is fine-tuned.}
    \label{fig:para_gap}
\end{figure}

\subsection{Parallel GaP}
\xl{In this section, we introduce the parallel GaP (\emph{P}-GaP) as a flexible and highly efficient complement to the \emph{C}-GaP sparse training method when sufficient training resources are available, as described in Figure~\ref{fig:para_gap}. 
Unlike the \emph{C}-GaP method where partitions are grown and pruned serially in one machine, the \emph{P}-GaP method distributes the model to $\kappa$ machines. }
Each grows a different partition to dense and trains the model in parallel. Then, the trained dense partitions are combined to a dense model and pruned to a sparse one again. \xl{As a result, the \emph{P}-GaP method utilizes $\kappa$ more machines, but its training time is also $\kappa$ times shorter than the \emph{C}-GaP method.}
\xl{Since each server trains the model independently, the inter-server communication only occurs when the dense parts are combined to synchronize their learned weights once every few epochs.} Thus its impact to performance is negligible.
Each pair of distribution and combining form one round in the {\em P}-GaP.
Unlike {\em C}-GaP, one round of {\em P}-GaP consists of only one step. The detailed \emph{P}-GaP method is shown in Algorithm~\ref{algo:p_gap} and illustrated in Figure~\ref{fig:workflow} (b) in Appendix~\ref{appdx:workflow}.

Some of the differences between the parallel GaP and the conventional distributed training are listed as follows.


    
    
    
    


\begin{itemize}[label={}, leftmargin=*]
    \item 1. In {\em P}-GaP, data communicates across different server nodes every $T$ epochs of training. It is far less frequent than conventional distributed training, which requires  communications and synchronization in every mini-batch. Therefore, the data bandwidth between the nodes remains minimal.
    \item 2. In {\em P}-GaP, each node can use the single-node hyper-parameters to train the sparse {models}. It does not require excessively large batch-size to fully utilize the computing resource. 
    \item 3. Different from data parallelism, model parallelism, and pipeline parallelism, the {\em P}-GaP explores another dimension of parallelism when training sparse models, which we call it {\em mask parallelism}. 
    This implies that masks in different partitions are less correlated and can be searched separately. 
\end{itemize}

\section{Experimental Results}

This section  evaluates the  proposed \emph{C}-GaP and \emph{P}-GaP sparse training methods on different machine learning tasks.
Since our goal is to search for the optimal sparse models to speed up inference, we report the best model accuracy obtained using the GaP methods for all models. 
We compare our GaP method with the state-of-the-arts, as well as our implemented prune-from-dense results. If not otherwise specified, all prune-from-dense are implemented using magnitude-based iterative pruning schedule as described in GMP~\citep{zhu2017prune,gale2019state}. 
More ablation experiments on partition number, partition scheme, different sparsity granularity using GaP, and FC layer pruning are shown in Appendix~\ref{appdx:ablation}. 



All of our experimental results are trained and inferenced using PyTorch in the machines with 8 NVIDIA-V100 GPUs. The cyclic GaP is trained in one training node and the parallel GaP is trained in $\kappa$ training nodes, where $\kappa$ is the number of model partitions. {In the pruning stage of the GaP method, the weights with the lowest magnitudes are pruned to zero.} 
\f{Note that in our experiments, only {the} weights for convolution, matrix-vector, and matrix-matrix multiplications are made sparse. The weights related to  biases and batch-normalization  are kept dense, as they are critical for model quality but only contribute to a tiny fraction of the inference FLOPs.}
\xl{For uniform sparsity, we prune individual layer separately by sorting all weights in the layer based on their magnitudes, and prune away any weight whose magnitude is below the percentile described as the sparsity level. Thus, all layers receive the same sparsity.
For non-uniform sparsity, 
the pruning process is similar, but the weights in the entire model are sorted together. Thus, the sparsity level in different layers may differ, with the more important layers pruned less than the trivial layers. 
}
We list the implementation details and hyper-parameter settings for all tasks in Appendix~\ref{appdx:settings}.


\begin{table*}[t]
\small
\centering
\caption{Results of sparse ResNet-50 models on ImageNet. }
\scalebox{0.91}{
\begin{tabular}{l | c | c | cc | cc}
\toprule
\multirow{4}{*}{Method} & \multirow{4}{*}{Distribution} &  & \multicolumn{2}{c}{Sparsity ratio:}     & \multicolumn{2}{|c}{Sparsity ratio:}    \\ 
&  & Total & \multicolumn{2}{c}{80\%}     & \multicolumn{2}{|c}{90\%}    \\ \cmidrule{4-7}
&& epochs & Top-1 & FLOPS\xl{\textsuperscript{\ding{115}}} &  Top-1 & FLOPS\xl{\textsuperscript{\ding{115}}} \\ 
&&& Acc (\%) & ($\times$e9) & Acc (\%) & ($\times$e9) \\ \midrule
Dense~\citep{NVIDIA} & \multicolumn{6}{c}{Top-1 accuracy: 78.2, FLOPS: 8.2$\times$e9}  \\ \midrule
Prune-from-dense & uniform & 750/1250$^\dag$ & 77.1 & 1.7     & 75.8 & 0.8 \\ \midrule

SNIP~\citep{lee2018snip}  & uniform  & 100 & 72.0 & 1.7   & 67.2 & 0.8  \\
SET~\citep{mocanu2018scalable} & uniform & 100 & 72.9 & 1.7   & 69.6 & 0.8 \\
SET$_{12\times}$$^*$ & uniform & 1200 & 76.4 & 1.7   & 74.5 & 0.8 \\

RigL~\citep{evci2020rigging} & uniform & 100 & 74.6 & 1.7   & 72.0 & 0.8  \\ 
RigL$^*$ & uniform & 100 & 74.6 & 1.7   & 72.5 & 0.8  \\ 
RigL$_{5\times}$~\citep{evci2020rigging}  & uniform & 500 & 76.6 & 1.7   & 75.7 & 0.8 \\
RigL$_{5\times}$$^*$ & uniform & 500 & 76.9 &  1.7 & 75.6 & 0.8 \\ 
RigL$_{12\times}$$^*$  & uniform & 1200 & 77.1 & 1.7   & 76.0 & 0.8 \\
\textbf{\emph{C}-GaP}  & uniform & 990 & \textbf{77.9}  & 1.7   & \textbf{76.3}  & 0.8    \\
\textbf{\emph{P}-GaP} & uniform & 1110 & 77.5 & 1.7   & 76.1  & 0.8    \\ \midrule 

GMP~\citep{gale2019state} & non-uniform & 135 & 76.5 &  n/a & 75.0 & n/a \\ 
SNIP~\citep{lee2018snip}  & non-uniform  & 100 & 69.7 & 2.8   & 61.9 & 1.9  \\
GraSP~\citep{wang2019picking}   & non-uniform &  150 & 72.1 & 2.8    & 68.1 & 1.9  \\
DeepR~\citep{bellec2018deep}  & non-uniform & 100 & 71.7 & n/a   & 70.2 & n/a  \\
SET~\citep{mocanu2018scalable} & non-uniform & 100 & 72.6 & 1.7   & 70.4 & 0.8 \\
DSR~\citep{mostafa2019parameter} & non-uniform & 100 & 73.3 & 2.9   & 71.6 & 0.8   \\ 
DPF~\citep{Lin2020Dynamic} & non-uniform & 90 & 75.1 & n/a &  n/a & n/a \\
RigL~\citep{evci2020rigging} & ERK & 100 & 75.1 & 3.4 & 73.0 & 2.0  \\ 
RigL$^*$ & ERK & 100 & 75.4 & 3.4 & 73.9 & 2.0  \\ 
RigL$_{5\times}$~\citep{evci2020rigging} & ERK & 500 & 77.1 &  3.4 & 76.4 & 2.0 \\ 
RigL$_{5\times}$$^*$ & ERK & 500 & 77.4 &  3.4 & 76.3 & 2.0 \\ 
RigL$_{12\times}$$^*$ & ERK & 1200 & 77.4 &  3.4 & 76.8 & 2.0 \\
\textbf{\emph{C}-GaP}   & non-uniform & 990 & \textbf{78.1} &  2.7  &  \textbf{77.9}  & 2.0  \\ 



\bottomrule
\multicolumn{7}{l}{$^\dag$ 750 epochs for 80\% sparsity and 1250 epochs for 90\% sparsity (please see Appendix~\ref{appdx:resnet_implmentation} for details).} \\
\multicolumn{7}{l}{$^*$ Our implementation (please refer to Appendix~\ref{appdx:rigl_implmentation} for implementation details).} \\
\multicolumn{7}{l}{\xl{\textsuperscript{\ding{115}} Following the convention in~\citep{evci2020rigging}, multiplication and addition are counted as two operations.}} 
\vspace{-1em}
\end{tabular}}
\label{tab:rn50_irr}
\end{table*}

\subsection{Image classification: ResNet-50 on ImageNet} \label{sec:result_rn50}

Table~\ref{tab:rn50_irr} compares the accuracy between using the GaP methods and the previous works. 
The ResNet-50 is divided into four partitions by the last three downsampling stages. 
For non-uniform sparsity, all layers are sparsified.
For the uniform sparsity, the first convolutional layer with $7\times 7$ kernels is kept dense, the same as in~\citet{evci2020rigging}. The fully connected (FC) layer only contributes 0.5\% of FLOPs, thus it is also kept dense (sparse FC layer has similar accuracy with dense FC layer, please check the ablation results in Appendix~\ref{appdx:ablation}).

To ensure fair comparison, we perform the following demonstrations in addition to the original baseline results. We include the SNIP~\citep{lee2018snip} and SET~\citep{mocanu2018scalable} results in uniform sparsity that are reproduced in \citet{evci2020rigging}. We also implement the RigL and RigL$_{5\times}$~\citep{evci2020rigging} using our code base on PyTorch (please refer to Appendix~\ref{appdx:rigl_implmentation} for details). In Table~\ref{tab:rn50_irr}, our reproduction achieves equivalent or higher accuracy than the one reported in the paper.
Based on that, we extend the RigL training time by 12$\times$ to match the GaP method. Additionally, SET shares similar method with RigL, thus we also implement and extend SET training by 12$\times$.

\f{We observe that the improvement over RigL$_{5\times}$~\citep{evci2020rigging} is 1.3\% (77.9\% vs. 76.6\%) at uniform 80\% sparsity, and 0.6\% at 90\% uniform sparsity.} The {\em C}-GaP is also better than our re-implementation of the prune-from-dense method using ADMM~\citep{ren2019admm}.
\f{For non-uniform sparse ResNet-50 model, the improvement over RigL$_{5\times}$~\citep{evci2020rigging} is 1.0\%  and 1.5\%  at 80\% and 90\% sparsity, respectively.  }
{Note that the 80\% sparse ResNet-50 model can almost match the accuracy of the dense model reported by~\citet{NVIDIA}.}
We also observe that, when we extend the training time for the SOTA methods such as SET and RigL to match the GaP training efforts, 
their accuracies still lag behind. 
As an example, the model at 80\% non-uniform sparsity trained using the {\em C}-GaP method outperforms the re-implementation of the ERK RigL$_{12\times}$ model with 0.7\% accuracy increase and 21\% FLOPS reduction (2.7 GFLOPS vs. 3.4 GFLOPS). The 90\% non-uniform sparsity model achieves 1.1\% accuracy increase at the same FLOPS.

\xl{Comparing the {\em C}-GaP and {\em P}-GaP pruning methods with same training epochs, we find that {\em C}-GaP achieves 0.4\% and 0.2\% higher accuracy at 80\% and 90\% uniform sparsity, respectively. We conjecture the reason is that {\em P}-GaP converges slower than {\em C}-GaP. In {\em C}-GaP, the weight values in the sparse partitions are continuously trained to get better model accuracy when the dense partition is explored, while the {\em P}-GaP method only keeps the dense partition and discards the sparse partitions when combining all dense partitions to the updated model. 
}

\subsection{Object detection: SSD on COCO-2017} \label{sec:ssd}


We divide the SSD network into 4 partitions with three in the backbone and one in the detection head. We train the model using cyclic-partitioned {\em C}-GaP with a uniform weight sparsity of 90\% on all layers except for the last output layer, which is kept dense. We report the accuracy of the best model searched within 40 GaP steps.
As shown in Table~\ref{tab:ssd}, in the mean average precision category mAP@[.5:.95],  the sparse model obtained using our {\em C}-GaP method exceeds the dense model by 0.7 mAP (25.9 vs 25.2). It also exceeds the best sparse model iteratively pruned from the dense model by 1.6 mAP (25.9 vs 24.3). 


\begin{table}[h]
\centering
\caption{Results of sparse SSD models for object detection on COCO-2017. }
\scalebox{0.75}{
\begin{tabular}{l | c | c c c | c c c | c c c | c c c}\toprule
    \multirow{2}{*}{Method} & {Sparsity} & \multicolumn{3}{c}{AP, IoU:} & \multicolumn{3}{c}{AP, Area:} & \multicolumn{3}{c}{AR, $\#$Dets: } & \multicolumn{3}{c}{AR, Area:} \\\cmidrule(lr){3-5}\cmidrule(lr){6-8}\cmidrule(lr){9-11}\cmidrule(lr){12-14}
    & ratio  & 0.5:0.95 & 0.5 & 0.75 & S & M & L & 1 & 10 & 100 & S & M & L \\\midrule\midrule
    {Dense~\citep{NVIDIA}} & 0 & 25.2 & 42.7 & 25.8 & 7.3 & 27.1 & 40.8 & 23.8 & 34.5 & 36.1 & 11.7 & 39.6 & 56.1 \\
    Prune-from-dense  & 90\% & 24.3 & 41.1 & 24.8 & 6.8 & 26.3 & 40.0 & 23.3 & 34.0 & 35.8 & 11.1 & 39.4 & 55.7 \\
    \textbf{\emph{C}-GaP}  & 90\% & \textbf{25.9} & 42.3 & 26.9  &  8.0 & 28.1 & 42.6 & 24.7 & 35.9 & 37.8 & 12.7 & 41.5 & 58.5 \\
    \bottomrule
\end{tabular}}
\label{tab:ssd}
\end{table}

\subsection{3D object part segmentation: PointNet++ on ShapeNet} \label{sec:result_pointnet}

We divide the PointNet++ model into 4 partitions with three in the backbone and one in the segmentation head. We apply {the} {\em C}-GaP and {\em P}-GaP {methods} with uniform sparsity of 80\% and 90\% on all layers, respectively. We report the accuracy of the best model searched within 40 GaP steps. Table~\ref{tab:pointnet_irr} compares them with the dense model and the best sparse model pruned from the dense model using the ADMM algorithm. The results show that on the class and instance mAP categories, the pruned model using the {\em C}-GaP method easily beats the model pruned from dense at both sparsities. It even beats the dense model at 80\% sparsity. The pruned model using the {\em P}-GaP method also beats the model pruned from dense at 80\% sparsity and is not far behind at {90\%} sparsity.

\begin{table}[h]
\small
\centering
\caption{Results of sparse PointNet++ models for 3D part segmentation on ShapeNet.}
\scalebox{0.92}{
\begin{tabular}{l | c | c c | c c }\toprule
    \multirow{2}{*}{Method} & \multirow{2}{*}{\#Points} & \multicolumn{2}{c|}{Sparsity ratio: 80\%} &  \multicolumn{2}{c}{Sparsity ratio: 90\%}  \\\cmidrule(lr){3-4}\cmidrule(lr){5-6}
     &  &  Class mIoU (\%) & Instance mIoU (\%) &   Class mIoU (\%) & Instance mIoU (\%) \\\midrule
     Dense~\citep{Pytorch_Pointnet_Pointnet2} &  {2k} & \multicolumn{4}{c}{Class mIoU: 82.5, Instance mIoU: 85.4} \\ \midrule
     Prune-from-dense &  2k &79.1  & 84.5 & 77.1  & 84.0  \\
     \textbf{\emph{C}-GaP} &  2k &  \textbf{82.9}  & \textbf{85.8}&  \textbf{79.5} & \textbf{85.1} \\
     \textbf{\emph{P}-GaP} &  2k & 80.8  & 85.5 &  74.0 & 83.7 \\ 
     \bottomrule
\end{tabular}}
\label{tab:pointnet_irr}
\end{table}

\subsection{Translation: Transformer on WMT-14 En-De dataset} \label{sec:result_trans}
In this part, we evaluate the \emph{C}-GaP and the {\em P}-GaP {methods} on the translation task based on Transformer~\citep{vaswani2017attention}. We train the Transformer {models} on the WMT-14 En-De dataset and evaluate the SacreBLEU score~\citep{post-2018-call}. 

We equally divide the Transformer models into three or six partitions with the decoder containing 2$\times$ partitions of the encoder. We apply {the} {\em C}-GaP and {\em P}-GaP {methods} with uniform sparsity of 80\% and 90\% on all layers, respectively. 

We report the best SacreBLEU scores on the validation dataset in Table~\ref{tab:trans_irr} within 30 GaP steps. The models trained using both {\em C}-GaP and {\em P}-GaP methods significantly improve over the sparse models obtained by pruning from {the} dense counterparts. They exceed the dense model quality at 80\% sparsity, and the {model using three-partition} {\em C}-GaP {method} even outperforms the dense model at 90\% sparsity. 

\begin{table*}[ht]
\small
\centering
\caption{Results of sparse Transformer models for the translation task on WMT14 En-De. }
\scalebox{0.92}{
\begin{tabular}{l | c c | c c}\toprule
    \multirow{2}{*}{Method} &   \multicolumn{2}{c|}{Sparsity ratio: 80\%} & \multicolumn{2}{c}{Sparsity ratio: 90\%}  \\\cmidrule(lr){2-3} \cmidrule(lr){4-5}
     &  $\kappa=3$ &  $\kappa=6$ & $\kappa=3$ &  $\kappa=6$ \\\midrule
     Dense~\citep{NVIDIA} & \multicolumn{4}{c}{SacreBLEU: 27.6} \\\midrule
     Prune-from-dense & \multicolumn{2}{c}{27.1} & \multicolumn{2}{c}{25.7} \\ \cmidrule(lr){2-3} \cmidrule(lr){4-5}
     \textbf{\emph{C}-GaP} &  27.6 & 27.6  & \textbf{27.7} & 27.1\\
     \textbf{\emph{P}-GaP} &  \textbf{27.9} & 27.7 & 27.3 & 26.9 \\ 
     \bottomrule
\end{tabular}}
\label{tab:trans_irr}
\end{table*}

\section{Theoretical Justification}
\label{sec:theory}

{\color{black}
We now provide the convergence guarantee for  the cyclic-partitioned \emph{C}-GaP algorithm. We let $F(\Theta) = \mathbb{E}_{x\sim D} f(x;\Theta)$ be the loss function of the deep learning task where $D$ is the data distribution. In addition, we let $m_q$ be the mask at the start of the $q$-th {round}, $\Theta_q$ be the learned model weights after $q-1$ {rounds}, and $\Theta^{(i)}_q, i=1,\cdots, \kappa$ be the learned {weights} in the $q$-th {round} after the $i$-th partition is grown and trained for $T$ epochs. Proposition \ref{prop-theory} shows that \emph{C}-GaP converges to a neighborhood around the stationary solution at rate $O(1/\sqrt{Q})$ when learning rate is set properly. Due to the space limitation, we put its proof in Appendix~\ref{appdx:proof}.

\begin{proposition}[\sc \emph{C}-GaP convergence]
\label{prop-theory}
Suppose the loss function $F(\Theta)$ is partition-wise $L$-smooth, the sampled stochastic gradient is unbiased and has bounded variance, and the relative error introduced by each mask is bounded by $\delta^2 \in [0,1)$, i.e., $\|\Theta_q - \Theta_q \odot m_q\|^2 \le \delta^2 \|\Theta_q\|^2$. If the learning rate $\gamma = 1/(4\kappa L T \sqrt{Q})$, the {sparse} models generated by the C-GaP method after $Q$ {rounds} will converge as follows: 
\begin{align}\label{eq-theory}
\frac{1}{Q}\sum_{q=1}^{Q}\mathbb{E}\|\nabla F(\Theta_q \odot m_q)\|^2 = O\Big( \frac{G}{\sqrt{Q}} + \frac{\delta^2}{Q}  \sum_{q=1}^Q \sum_{i=1}^\kappa \mathbb{E}\| \Theta_q^{(i)} \|^2 \Big)
\end{align}
where $G$ is a constant related {to} the gradient noise and the initialized model  weights.  
\end{proposition}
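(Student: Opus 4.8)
I would view one round of \emph{C}-GaP as $\kappa$ consecutive phases, where a phase prunes the currently-dense partition, grows the next partition, and then runs $T$ epochs of masked SGD; write $\Theta_q^{(i)}$ for the weights after the $i$-th such phase of round $q$ and $\widetilde\Theta_q^{(i)}$ for the state immediately after its prune-and-grow, so that $\Theta_q^{(i)}$ is the SGD output started from $\widetilde\Theta_q^{(i)}$. The plan is to establish a one-phase descent inequality for $F$ and telescope it over $i=1,\dots,\kappa$ and $q=1,\dots,Q$. Since growing only flips mask bits of coordinates that are currently zero (so $F$ is left unchanged) and each stochastic step touches only active coordinates, the $T$-epoch run is ordinary SGD; combining $L$-smoothness with the unbiasedness and bounded-variance assumptions and carrying a $\gamma^2 T^2$ term for the drift of the iterates within a phase, I would obtain
\[
\mathbb{E}\big[F(\Theta_q^{(i)})\big]\;\le\;\mathbb{E}\big[F(\widetilde\Theta_q^{(i)})\big]\;-\;c_1\gamma T\,\mathbb{E}\big\|\nabla F(\widetilde\Theta_q^{(i)})\big\|^2\;+\;c_2 L\gamma^2 T^2\,\mathbb{E}\big\|\nabla F(\widetilde\Theta_q^{(i)})\big\|^2\;+\;c_3 L\gamma^2 T\sigma^2 .
\]
The prescribed $\gamma = 1/(4\kappa LT\sqrt Q)$ makes the $c_2 L\gamma^2 T^2$ term dominated by $c_1\gamma T$, leaving a genuine negative $\|\nabla F\|^2$ term of order $\gamma T$.

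Next I would account for the prune-and-grow jump $\widetilde\Theta_q^{(i)}-\Theta_q^{(i-1)}$, which is supported on one partition and, by the relative mask-error assumption applied to that prune, obeys $\|\widetilde\Theta_q^{(i)}-\Theta_q^{(i-1)}\|^2\le\delta^2\|\Theta_q^{(i-1)}\|^2$. Partition-wise $L$-smoothness then gives
\[
\mathbb{E}\big[F(\widetilde\Theta_q^{(i)})\big]\;\le\;\mathbb{E}\big[F(\Theta_q^{(i-1)})\big]\;+\;\mathbb{E}\big\langle\nabla F(\Theta_q^{(i-1)}),\,\widetilde\Theta_q^{(i)}-\Theta_q^{(i-1)}\big\rangle\;+\;\tfrac{L}{2}\,\delta^2\,\mathbb{E}\big\|\Theta_q^{(i-1)}\big\|^2 ,
\]
whose last term is the source of the $\delta^2\|\Theta_q^{(i)}\|^2$ contribution. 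The inner-product term is the delicate point and, I expect, the main obstacle: a blunt Young's inequality with a constant parameter leaves a $\|\nabla F\|^2$ piece of the wrong order, so instead I would bound it with Cauchy--Schwarz and control the block gradient of the just-pruned partition by feeding it back into the descent guarantee of the phase that trained that partition, so that after summation the cross term is absorbed into the descent budget and the residual mask-induced bias stays of order $\delta^2\sum_{q,i}\mathbb{E}\|\Theta_q^{(i)}\|^2$ rather than a larger power of $Q$.

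Chaining the two inequalities over all phases and rounds, using $F\ge\inf F>-\infty$, and rearranging, I would arrive at
\[
\frac{1}{\kappa Q}\sum_{q=1}^{Q}\sum_{i=1}^{\kappa}\mathbb{E}\big\|\nabla F(\widetilde\Theta_q^{(i)})\big\|^2 \;=\; O\!\Big(\frac{G}{\sqrt Q}+\frac{\delta^2}{Q}\sum_{q=1}^{Q}\sum_{i=1}^{\kappa}\mathbb{E}\big\|\Theta_q^{(i)}\big\|^2\Big),
\]
where $G$ collects $F(\Theta_1)-\inf F$, the gradient variance $\sigma^2$, and initialization-dependent constants. Finally, to pass from the phase-start iterates to the round-start sparse points $\Theta_q\odot m_q$ in the statement, I would invoke $L$-smoothness once more, $\|\nabla F(\Theta_q\odot m_q)\|^2\le 2\|\nabla F(\widetilde\Theta_q^{(i)})\|^2+2L^2\|\Theta_q\odot m_q-\widetilde\Theta_q^{(i)}\|^2$, and bound the displacement between the start of round $q$ and its intra-round iterates by the round's cumulative SGD drift --- which is $O(\gamma^2)$ and folds into the $G/\sqrt Q$ term --- together with the at most $\kappa$ within-round pruning jumps, whose squared norms sum to $O\big(\sum_i\delta^2\|\Theta_q^{(i)}\|^2\big)$ and reproduce the stated bias; averaging over $q$ then yields \eqref{eq-theory}.
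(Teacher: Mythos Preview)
Your plan departs from the paper's in a way that creates two concrete gaps.

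\textbf{First}, the per-phase SGD inequality you write cannot hold with the \emph{full} gradient $\|\nabla F(\widetilde\Theta_q^{(i)})\|^2$ on the right. Since each stochastic step touches only active coordinates (and in the paper's math-friendly reformulation only the $i$-th partition is updated at all), the one-step descent is in $\|\nabla_i F\|^2$; over $T$ steps you obtain at best $c_1\gamma T\,\mathbb{E}\|\nabla_i F(\widetilde\Theta_q^{(i)})\|^2$. Summing over $i=1,\dots,\kappa$ then gives partition gradients evaluated at $\kappa$ \emph{different} points, and assembling these into $\|\nabla F\|^2$ at a common round-start reference point requires a separate cross-phase drift bound. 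That is exactly the role of the paper's Lemma~\ref{app-lm-round-II}, which lower-bounds $\sum_{i,t}\|\nabla_i F(\Theta^{(i)}_{q,t-1})\|^2$ by $T\|\nabla F(\Theta^{(1)}_{q,0})\|^2$ minus drift terms; you only invoke a displacement bound at the very end, for a different purpose.

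\textbf{Second}, and more damaging to the stated rate, treating the mask error as a \emph{jump} in $F$ rather than as a gradient bias loses a factor of $Q$. Neither the cross term $\langle\nabla F(\Theta_q^{(i-1)}),\widetilde\Theta_q^{(i)}-\Theta_q^{(i-1)}\rangle$ nor the quadratic term $\tfrac{L}{2}\delta^2\|\Theta_q^{(i-1)}\|^2$ carries a factor of $\gamma$. To absorb the $\|\nabla_{i-1}F\|^2$ piece of the cross term into the phase-$(i{-}1)$ descent budget, which is only of size $O(\gamma T)$, the Young parameter must itself be $O(\gamma T)$, forcing the leftover $\delta^2\|\Theta\|^2$ piece to pick up a compensating $1/(\gamma T)=O(\kappa L\sqrt{Q})$. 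After telescoping and dividing by $Q$, the mask contribution is then $O\big(\delta^2\sum_{q,i}\|\Theta_q^{(i)}\|^2\big)$ --- a full factor of $Q$ larger than the $O\big(\tfrac{\delta^2}{Q}\sum_{q,i}\|\Theta_q^{(i)}\|^2\big)$ in \eqref{eq-theory}. The paper sidesteps this entirely by tracking the \emph{unmasked} iterate $\Theta^{(i)}_{q,t}$ throughout: pruning updates only the mask $m^{(i)}_q$, never zeros coordinates of $\Theta$, so there is no jump in $F$ at all. The mask enters solely through the point at which the stochastic gradient is evaluated, and the per-iteration error is $\gamma\,\|\nabla_i F(\Theta)-\nabla_i F(\Theta\odot m)\|\le \gamma L\delta\|\Theta\|$, already premultiplied by $\gamma$ (Lemma~\ref{lm-descent-inner}). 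It is precisely this extra $\gamma$ that produces the $1/Q$ in the final bound.
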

We make several remarks for Proposition~\ref{prop-theory} as follows.
\begin{remark}
If there is no pruning in the training process, then it holds that $\delta^2 = 0$. Substituting it into \eqref{eq-theory}, we find {that} \emph{C}-GaP will converge exactly to a stationary solution, i.e., $\mathbb{E}\|\nabla F(\Theta_Q)\|^2 \to 0$ as $Q$ increases to infinity. This implies  \emph{C}-GaP is also an effective algorithm even for dense training.  
\end{remark}

\begin{remark}
When a sparse mask is utilized, it holds that $\delta^2 \neq 0$. In this scenario, the {mask-induced} error will inevitably influence the accuracy of the trained model. 
{A smaller mask-induced error will lead to a more accurate model that \emph{C}-GaP can converge to.}
\end{remark}

\begin{remark}
In our proofs, we find that cycling through all partitions so that {all weights} can be {trained} per round is critical to guarantee the convergence of \emph{C}-GaP. Note that previous works update {weights} either greedily  (e.g., RigL \citep{evci2020rigging}) or randomly (e.g., SET~\citep{mocanu2018scalable} and DSR~\citep{mostafa2019parameter}). It may take {numerous} training steps for these algorithms to have each weight explored and updated. This may explain why \emph{C}-GaP achieves better accuracy than RigL, SET, and DSR (see Table~\ref{tab:rn50_irr}). This intuition 
is consistent with \citep{gurbuzbalaban2019random,ying2018stochastic} which theoretically establish that sampling data cyclically  converges faster than uniformly randomly in SGD training. 
\end{remark}

}

\section{Related Works}


\textbf{Pruning from pre-trained dense models}: 
Among various pruning algorithms, one-shot pruning~\citep{lecun1990optimal,xu2021rethinking} zeros out a given percentage of the trained weights with the smallest magnitude. Based on the magnitude-based one-shot pruning, the gradual magnitude pruning (GMP) \citep{zhu2017prune,gale2019state} proposes an iterative pruning method that progressively prune models to the target pruning ratio. However, those methods suffer from significant accuracy drop and relatively long training time.
Besides magnitude-based pruning, other approaches~\citep{wen2016learning,he2017channel,zhang2021unified} adapt mathematics-oriented regularization-based algorithms to generate sparsity.
Later works~\citep{zhang2018systematic,ren2019admm,ma2020pconv,ma2020image,niu2020patdnn,niu2021grim,rumi2020accelerating,ma2021non,zhang2021structadmm,huang2021sparse} utilizes dynamic regularization penalty such as ADMM to solve the pruning problem as an optimization problem and maintaining high accuracy. 
Several methods~\citep{zhou2021effective,srinivas2017training,molchanov2017variational} work in probability space.
Other works such as \xl{HRank~\citep{lin2020hrank}, SCOP~\citep{tang2020scop}}, DMCP~\citep{guo2020dmcp}, MetaPruning~\citep{liu2019metapruning} use complicated rules to generate the sparsity distribution in the channel level.
Since pre-training a dense model from which to select critical weights consumes large memory space, our work differs substantially from them such that we do not rely on a pre-trained dense model.

\textbf{Pruning at an early stage}: 
A new trend of exploring sparsity at an early stage~\citep{lee2018snip,wang2019picking,tanaka2020pruning,wimmer2020freezenet,van2020single,ma2021sanity,liu2021lottery} has emerged to embrace the promising sparse training paradigm. 
SNIP~\citep{lee2018snip} finds the sparse masks based on the saliency score of each weight that is obtained after training the dense model for only a few iterations. 
GraSP~\citep{wang2019picking} prunes weights based on preserving the gradient flow in the {model}.
EarlyBird~\citep{you2020drawing} conducts a retraining process after searching sub-network topology for a few epochs.
PruneTrain~\citep{lym2019prunetrain} integrates structured pruning in the pretraining process.
However, pruning at an early stage fails to achieve acceptable accuracy on ImageNet~\citep{deng2009imagenet}. We recognize the underlying reason is that the single-shot pruning criterion is based on the information from a few mini-batch of training samples, which cannot accurately represent the rich features of large-scale dataset. The proposed GaP methods repeatedly update the sparse masks and hence overcome their drawbacks.

\textbf{Sparse mask exploration}: To satisfy the condition of low computation cost as well as low memory footprint, many works have incorporated the training-from-sparse-initialization paradigm, and optimize the network by sparse mask exploration.
DeepR~\citep{bellec2018deep} proposes to train a sparse network at initialization, and dynamically adjust the sparse topology by {pruning the weights that flip the sign} and randomly growing new weights during training. However, DeepR is primarily demonstrated with sparsification of fully-connected layers and applied to relatively small and shallow networks. 
Sparse Evolutionary Training (SET)~\citep{mocanu2018scalable} and MEST~\citep{yuan2021mest} uses layer-wise magnitude-based pruning and random growth {at the end of each training epoch}. 
DSR~\citep{mostafa2019parameter} and STR~\citep{kusupati2020soft} designs a dynamic reparameterization method that allows weights to be re-distributed across layers by providing an effective and trainable way to allocate the sparsity across all layers. Similarly, DNW~\citep{wortsman2019discovering} trains a sparse model and keeps dense gradients to dynamically adjust the network connections.
SNFS~\citep{dettmers2019sparse} develops sparse momentum to identify the importance of weights.
RigL~\citep{evci2020rigging} and NeST~\citep{dai2019nest} propose to use magnitude-based pruning and gradient-flow-based growth that update sparse model topology during training.
\citet{yuan2021growing} uses a continuous relaxation and optimization method to dynamically update the structural masks of DNN models.
All of the mentioned works  update the sparse masks based on heuristics and there is little or no training at the mask update stage. Our work differs from them in that the mask updating rule of the GaP methods is more optimized by training a subset of layers into dense and then pruned. 





\section{\xl{Societal Impact and Limitations}}

\xl{\textbf{Societal Impact.} The scheduled GaP methods target to obtain the best quality of the pruned model under a runtime budget. Based on the researches in Facebook~\citep{hazelwood2018applied,wu2019machine}, many models are inferenced many billions of times per day, which makes the cost of pruning such models a tiny fraction of the inference cost in one day. Thus, a better pruning algorithm resulting to a smaller model may save a lot more computation on the inference side, which is of high value for the community and society to achieve Green AI \citep{schwartz2020green}. On the other hand, our method cannot prevent the possible malicious usage, which may cause negative societal impact.
}

\xl{\textbf{Limitations.} In this work, we focus on the unstructured sparsity to demonstrate the algorithm-level innovations of our proposed GaP methods, and we only preliminarily include a block sparsity example in Table~\ref{tab:ablation_trans_block} of Appendix~\ref{sec:block} to demonstrate its applicability to structured sparsity and its potential to improve the inference speed. More detailed analysis is still a work in progress.
}

\xl{The scheduled GaP methods require more training time than conventional pruning methodology, which may potentially limit its applicability  when the training compute resources are limited. 
Thus, the scheduled GaP methods are mostly beneficial to models whose benefit of the reduced inference time and/or improved accuracy out weigh the cost of the moderately longer training time.
}

\xl{The key hyper-parameter settings such as partition numbers or partition strategies are determined heuristically. In this work, we try to divide the models to partitions with similar parameter counts or computation. We also intuitively group consecutive layers to the same partition, based on our hypothesis that adjacent layers are tighter correlated than more distant layers. 
}

\xl{Our analysis in Proposition \ref{prop-theory} cannot be directly extended to cover {\em P}-GaP since the analysis is built on the partition-wise cyclic updating structure of {\em C}-GaP. We will leave the analysis for {\em P}-GaP as a future work.
}


\f{The prediction results from the sparse models are generally biased comparing with the dense models they are pruned from~\citep{hooker2019compressed}. This is a general problem on sparse model architectures, and our GaP methodology cannot reduce such biases.  }


\section{Conclusions}
Sparsity based model compression is gaining traction in research and industry to reduce the memory footprint and inference runtime. 
However, conventional compression approaches result to {noticeable} accuracy loss during the pruning process. 
In this paper, we propose a scheduled grow-and-prune (GaP) {methodology} to obtain sparse models. 
It divides a model into several partitions, grows and prunes each partition cyclically or in parallel. 
The experimental results show that this method preserves the model quality far better than previous works. In all four experimented tasks, the GaP method matches or even beats the dense solution at 80\% sparsity. 
In addition, the GaP method does not require a pre-trained dense model and can continuously searching sparsity masks with new data, showing its flexibility and practical potential. 

\section{Acknowledgement}

This work was supported by Alibaba Group through Alibaba Research Intern Program, and is partly supported by the National Science Foundation CCF-1937500. Any opinions, findings, and conclusions or recommendations expressed in this material are those of the authors and do not necessarily reflect the views of NSF.

{
}

\appendix
\clearpage

\section{Ablation Study}\label{appdx:ablation}

In this section, we present some ablation study on the proposed GaP methods.

\subsection{Number of partitions for GaP}

The number of partitions in GaP methods play a key role in obtaining better results. As compared in Table~\ref{tab:ablation_rn50_num_part} and Table~\ref{tab:ablation_trans_num_part}, one partition is not sufficient to preserve the best quality. Too many partitions may reduce quality as well (as shown in the model with six partitions in Table~\ref{tab:ablation_trans_num_part}). Empirically, a small number of equal partitions that naturally follow the structures of the models usually perform {well}. For example, ResNet-50 model is composed of four blocks, each sub-samples from the previous block. Thus a four-partition GaP is a natural choice. Transformer model contains an encoder and a decoder, with the decoder twice the number of attention blocks as the encoder. It is naturally divided to three partitions. 



\begin{table*}[ht]
\small
\centering
\caption{The effect of different number of partitions for ResNet-50 models on ImageNet.}
\scalebox{0.93}{
\begin{tabular}{l | cc | cc }
\toprule
Method & Sparsity Ratio & Distribution & Num. Partitions $\kappa$  & Top-1 Acc (\%)        \\ \midrule
\textbf{\emph{C}-GaP}  & 90\% & uniform & 1 &  75.9 \\ 
\textbf{\emph{C}-GaP}  & 90\% & uniform  & 4 &  76.3  \\ 
\bottomrule
\end{tabular}}
\label{tab:ablation_rn50_num_part}
\end{table*}

\begin{table}[h]
\small
\centering
\caption{The effect of different number of partitions for Transformer models on WMT-14 En-De.}
\scalebox{0.93}{
\begin{tabular}{l | cc | cc }
\toprule
Method & Sparsity Ratio & Distribution & Num. Partitions $\kappa$ & SacreBLEU        \\ \midrule
\textbf{\emph{C}-GaP}  & 90\% & uniform & 1 &  26.8 \\ 
\textbf{\emph{C}-GaP}  & 90\% & uniform  & 3 &  27.7  \\ 
\textbf{\emph{C}-GaP}  & 90\% & uniform  & 6 &  27.1  \\ 
\bottomrule
\end{tabular}}
\label{tab:ablation_trans_num_part}
\end{table}

\subsection{Partition the model in a random manner}

In the main text, each partition of the model consists of contiguous layers and they are cyclically grown and pruned. \xl{We also explore a random partition strategy for each step in the {\em C}-GaP for sparse ResNet-50 and Transformer models}. Unlike the cyclic partition, a random partition  chooses a uniformly random subset of layers to grow in each step of the GaP process. \xl{On the results} for ResNet-50 described in Table~\ref{tab:ablation_rn50_part_strategy}, the random partition is slightly worse than the cyclic partition but still better than all previous existing solutions. 
\xl{On the results for Transformer described in Table~\ref{tab:ablation_transformer_part_strategy}, however, the accuracy gap is much larger. At 90\% sparsity, Transformer model can achieve 27.7 SacreBLEU score in uniform sparsity with 3 partitions using C-GaP. When a random 3-partition is applied to a 90\% sparsity Transformer, the SacreBLEU score is reduced to 27.0 and 26.9 for uniform and non-uniform sparsity, respectively. We conjecture \xl{that different model structures may result to different accuracy gaps.} 
\xl{For ResNet-50, the correlations between neighboring layers may be small so each layer may be optimized independently. }
However, for Transformer, the inter-layer weight correlation is stronger (e.g., the $W_{key}$, $W_{query}$, $W_{value}$ in the same attention block have very strong correlation since they process the same input sequence for the self-attention computation)
}

\xl{This analysis shows that the partition mechanism affects the final model accuracy. Our scheduled GaP methods minimize the chance that the neighboring layers belong to different partitions, and have achieved superior accuracy.}

\begin{table}[h]
\small
\centering
\caption{Different partition strategies {for} ResNet-50 {models} on ImageNet.}
\scalebox{0.93}{
\begin{tabular}{l | cc | cc | c }
\toprule
Method & Sparsity Ratio & Distribution & Num. Partitions & Partition strategy & Top-1 Acc (\%)        \\ \midrule
\textbf{\emph{C}-GaP}  & 90\% & non-uniform & 4 & Cyclic &  77.9 \\ 
\textbf{\emph{C}-GaP}  & 90\% & non-uniform  & 4 & Random  &  77.8  \\ 
\bottomrule
\end{tabular}}
\label{tab:ablation_rn50_part_strategy}
\end{table}

\begin{table}[h]
\small
\centering
\caption{\xl{Different partition strategies {for} Transformer {models} on WMT-14 En-De.}}
\scalebox{0.93}{
\begin{tabular}{l | cc | cc | c }
\toprule
Method & Sparsity Ratio & Distribution & Num. Partitions & Partition strategy & SacreBLEU        \\ \midrule
\textbf{\emph{C}-GaP}  & 90\% & uniform & 3 & Cyclic & 27.7  \\ 
\textbf{\emph{C}-GaP}  & 90\% & uniform  & 3 & Random  &  27.0  \\ 
\textbf{\emph{C}-GaP}  & 90\% & non-uniform  & 3 & Random  & 26.9   \\ 
\bottomrule
\end{tabular}}
\label{tab:ablation_transformer_part_strategy}
\end{table}

\subsection{Generalization to other sparse granularity}
\label{sec:block}
We use unstructured sparsity throughout the paper. In this section, we demonstrate that the GaP {method} is also applicable to other coarse-grained sparse granularity such as block-wise sparsity. The unit for {the} block-wise sparsity in this experiment is $1\times 8$, i.e., if the weight matrix is viewed as many $1\times 8$ blocks, then the sparsity ratio of the matrix is defined as the ratio of {\em all-zero} blocks. It is often believed that the accuracy would be impacted severely when the block size is 8 or greater. Table~\ref{tab:ablation_trans_block} compares the SacreBleu scores of {the} sparse Transformer {  model}s obtained by {applying} the 3-partition cyclic {\em C}-GaP and the prune-from-dense methods. It is observed that the GaP {method} improves over the prune-from-dense method significantly (27.21 vs. 26.21). This further validates the benefits of mask updating {strategy} brought by the GaP {method}. 

\begin{table}[h]
\small
\centering
\caption{{Different sparsifying methods for the Transformer models with structured sparsity} on WMT-14 En-De.}
\scalebox{0.93}{
\begin{tabular}{l | cc | cc }
\toprule
Method & Sparsity Ratio & sparse granularity & Num. Partitions $\kappa$ & BLEU        \\ \midrule
prune-from-dense & 80\% & block $1\times 8$ & - & 26.1 \\ \midrule
\textbf{\emph{C}-GaP}  & 80\% & block $1\times 8$  & 3 &  27.2  \\ 
\bottomrule
\end{tabular}}
\label{tab:ablation_trans_block}
\end{table}

\subsection{Pruning the fully connected layer in the ResNet-50 model}
In the main context, the last fully connected (FC) layer of the ResNet-50 {model} is kept dense {in the uniform sparsity distribution} since it contributes to less than 1\% of the FLOPs. 
{In Table~\ref{tab:rn50_irr_sup}, we perform additional experiments to supplement Table~\ref{tab:rn50_irr} by pruning the last FC layer using the {\em C}-GaP method and comparing them with~\cite{evci2020rigging}.}
The results in Table~\ref{tab:rn50_irr_sup} and Table~\ref{tab:rn50_irr} indicate that the accuracy of {the} ResNet-50 models with sparse and dense FC layers are similar, and both of them outperform the state-of-the-art results in~\cite{evci2020rigging}.
Please note that models with the non-uniform sparsifying distribution in Table~\ref{tab:rn50_irr} already have the last FC layer pruned, thus the experiment setup is the same as the ones in~\cite{evci2020rigging}.

\begin{table}[htbp]
\small
\centering
\caption{Results of sparse ResNet-50 models on ImageNet. }
\scalebox{0.91}{
\begin{tabular}{l | c | cc | cc}
\toprule
Method & Distribution & \multicolumn{2}{c}{Sparsity ratio: 80\%} & \multicolumn{2}{|c}{Sparsity ratio: 90\%}    \\ \midrule
&& Top-1 Acc (\%) & FLOPS ($\times$e9) &  Top-1 Acc (\%) & FLOPS ($\times$e9) \\ \midrule
RigL~\cite{evci2020rigging} & uniform & 74.6 & 1.7   & 72.0 & 0.8  \\ 
RigL$_{5\times}$~\cite{evci2020rigging}  & uniform & 76.6 & 1.7   & 75.7 & 0.8 \\

\textbf{\emph{C}-GaP} (dense FC)   & uniform  & \textbf{77.9}  & 1.7   & \textbf{76.3}  & 0.8    \\
\textbf{\emph{C}-GaP} (sparse FC) & uniform  & \textbf{77.8} & 1.7   & \textbf{76.4}  & 0.8    \\ 

\bottomrule

\end{tabular}}
\label{tab:rn50_irr_sup}
\end{table}

\section{Experiment Setup}\label{appdx:settings}

Most training scripts and hyper-parameters are inherited from~\citet{NVIDIA} 
and~\citet{Pytorch_Pointnet_Pointnet2}.
Identical hyper-parameters are applied to each GaP step and the final fine-tune stage. For example, the initial learning rate of each step is restored to the same value. This hyper-parameter restoration avoids pre-determining the total number of epochs before training and enables the GaP methods to continuously search for sparse masks.

\subsection{Image classification: ResNet-50 on ImageNet}\label{appdx:resnet_implmentation}

In the image classification task, we use standard data augmentation, a batch size of 2048, cosine annealing learning rate schedule, SGD optimizer with a momentum of 0.875, and a weight decay of 3.05e-5. 
The learning rate is scheduled with a linear warm-up for 2 epochs before reaching the initial learning rate of 2.048. 
Each {GaP step} with non-uniform and uniform sparsity includes 30 of training, respectively. The final fine-tuning includes 150 epochs.
For \emph{C}-GaP, we train for 28 steps (i.e., 7 rounds with 4 partitions), and for \emph{P}-GaP, we train for 32 steps (i.e., 8 rounds with 4 partitions). After the GaP step, we prune the dense partition(s) and fine-tune the model.

For {the} prune-from-dense method, we use iterative ADMM pruning. We first pretrain a dense model using 250 epochs, and perform ADMM regularization training for 250 epochs. Then we prune the model to 80\% sparsity and finetune for another 250 epochs. Thus, the total number of epochs for 80\% sparsity is 750 epochs. Starting from the 80\% sparse model, we iteratively prune to 90\% sparsity by firstly training the 80\% sparse model with ADMM regularization for 250 epochs, and then prune to 90\% sparsity and finetune 250 epochs. It takes a total of 1250 epochs to train a 90\% sparsity model.

\subsection{Our Implementation Details of RigL}\label{appdx:rigl_implmentation}

In this section, we provide the implementation details of the RigL~\citep{evci2020rigging}. For the experiments with 100 training epochs, Most of the important hyper-paramters can be found in \citet{evci2020rigging}. We adopt the settings in our PyTorch code base and the accuracy of RigL in 80\% and 90\% sparsity can be reproduced, both in uniform and ERK sparsity distributions, respectively. As shown in Table~\ref{tab:rn50_irr}, our implementation has slightly higher accuracy, which indicates that our code base in PyTorch is comparable with the original implementation in~\citet{evci2020rigging}, and can be used for reproducing and extending the training of RigL without causing fairness issue.

Based on the above implementation, we reproduce the RigL results with 500 training epochs. However, \citet{evci2020rigging} doesn't provide detailed hyper-parameter settings for 500-epoch training. We perform extensive experiments and find out that using the same step learning rate decay scheduler can not reproduce the accuracy claimed in the original RigL paper, regardless of the learning rate decay factor or decay epochs. We change the step learning rate decay scheduler to the \emph{cosine annealing} learning rate scheduler~\citep{loshchilov2016sgdr}, and add \emph{mixup} technique~\citep{zhang2017mixup} with the hyper-parameter 0.2 to improve the generalization ability of the network. We find that the 500-epoch accuracy can be reproduced in these settings. We conjecture that the cosine annealing learning rate schedule and the mixup technique reconcile the overfitting problem that easily occurs in long period training. 
We inherit the RigL training recipe used in our 500-epoch training, and extend the training to 1200 epochs. The reproduced results are shown in Table~\ref{tab:rn50_irr}.

\subsection{Object detection: SSD on COCO-2017}

To train SSD on COCO-2017 dataset, we use an input size of $300\times300$ pixels, a batch size of 64, step learning rate schedule, SGD optimizer with a momentum of  0.9, and a weight decay of 5e-4. The learning rate is 2.6e-3 with 300 iterations of linear warm-up.  Each GaP step includes 32 epochs of training and the final fine-tuning includes 65 epochs.

For the prune-from-dense method, we report the mAP of 24.3 by iteratively pruning. We first pretrain a dense model using 65 epochs, then we prune iteratively to 50\% $\rightarrow$ 70\% $\rightarrow$ 80\% $\rightarrow$ 87.5\% $\rightarrow$ 90\%. For each pruning iteration, we train the sparse model for 65 epochs. {It takes 390 epochs to get the final 90\% sparsity model. Please note that directly pruning a 90\% sparsity model only results to 21.98 mAP, which is significantly lower than the 24.3 mAP using the iterative pruning method. The model pruned using \emph{C}-GaP method can achieve 25.9 mAP.}

\subsection{3D object part segmentation: PoineNet++ on ShapeNet}
When training PointNet++~\citep{qi2017pointnet++,Pytorch_Pointnet_Pointnet2} model on ShapeNet~\citep{yi2016scalable} dataset, 
we use a batch size of 32, step learning rate schedule with an initial learning rate of 0.001, SGD optimizer with a momentum of 0.9, and a weight decay of 1e-4.  
\xl{We pretrain a dense model using 250 epochs.}
Each step of GaP includes 25 epochs of training. The final fine-tuning includes 250 epochs.

\subsection{Translation: Transformer on WMT-14 En-De Dataset}

We use a batch size of 5120, inverse square-root scheduler with an initial learning rate of 0.000846, and Adam optimizer.  Each step in GaP includes 10 epochs of training, and the final fine-tuning includes 40 epochs.
\xl{We pretrain a dense model using 40 epochs.}
For prune from dense method, we use one-shot magnitude-based pruning to prune Transformer directly to 80\% and 90\% sparsity.

\begin{figure}[t]
    \centering
    \includegraphics[width=0.99\textwidth]{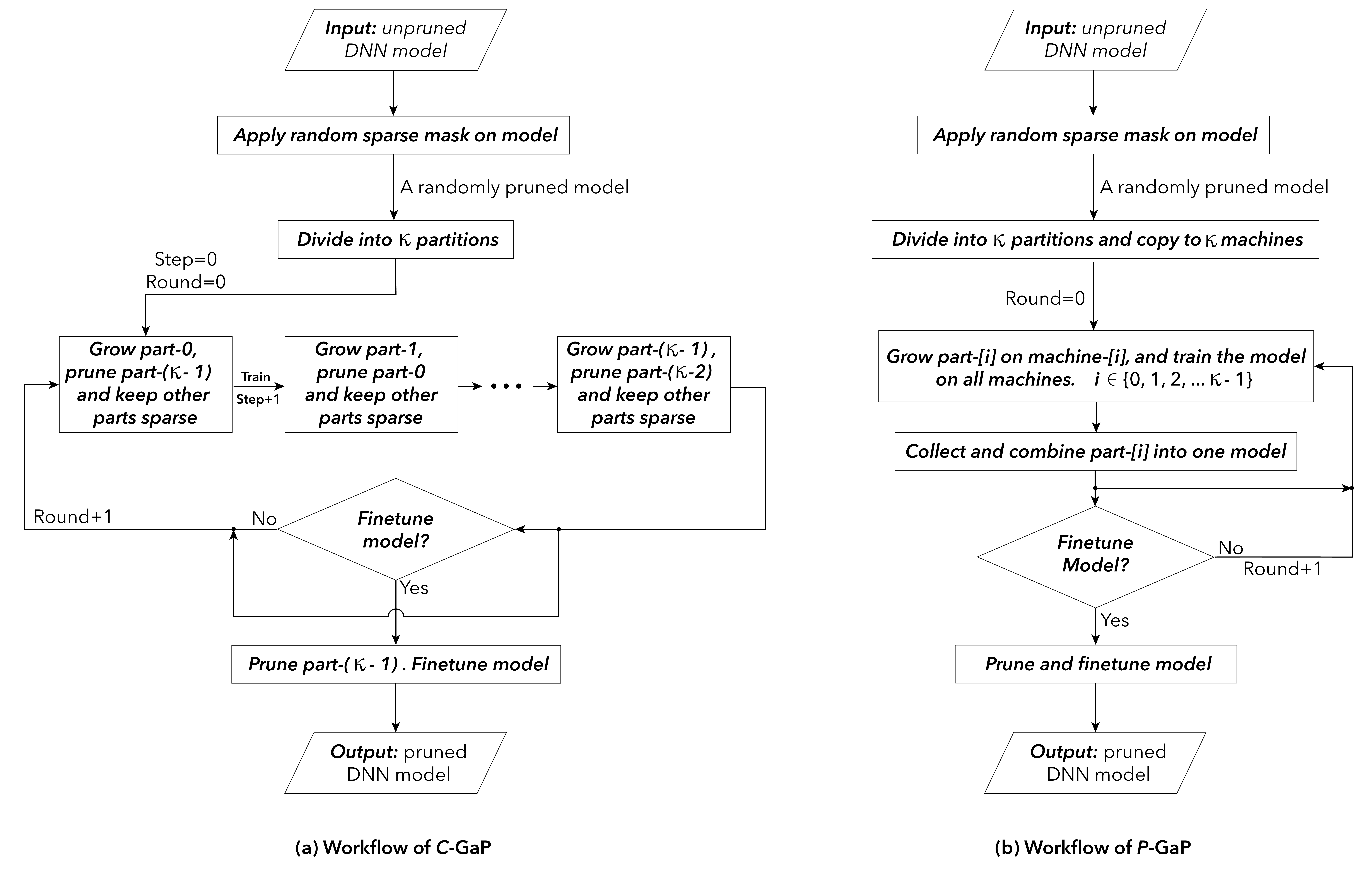}
    \small
    \caption{Training workflow of the (a) \emph{C}-GaP and (b) \emph{P}-GaP. In \emph{C}-GaP (left), the model is partitioned and trained on one machine for multiple rounds. In each round, the sparse mask for each partition is grown and pruned cyclically. Each round has $\kappa$ steps when the model is partitioned into $\kappa$ parts. In \emph{P}-GaP (right), a randomly initialized and sparsified model is copied and distributed to $\kappa$ machines when partitioned into $\kappa$ parts. Each round contains one step since all machines train the corresponding copy in parallel. An optimized model can be collected from the process and pruned to the desired sparsity scheme, then finetuned to restore accuracy.}
    \label{fig:workflow}
\end{figure}

\section{\xl{The Difference Between Scheduled GaP and Random Sparse Mask Exploration}}\label{appdx:gap_random}

\xl{In our scheduled GaP algorithm, we intend to use all information in a model. i.e., explore all weights efficiently. One may argue that when trained sufficiently long, all weights will be explored even for random mask exploration.
However, random sparse mask exploration and our scheduled GaP methods are like sampling operations with replacement and without replacement~\citep{basu1958sampling}, and  their required training time to get the same weight coverage differs a lot .}


\xl{If we consider a very small model with only 10 weights, and each time we only train one weight and keep the remaining weights the same. When we apply the scheduled GaP algorithm (without replacement), each weight is guaranteed to be trained after 10 steps (one step indicates training one weight). This is because the scheduled GaP alternates the weight to train. The already trained weight is guaranteed not to be re-trained again until all the remaining weights are trained. On the other hand, if a random weight is selected to train in each step (with replacement), it may take 29 steps to be fairly confident that all weights are trained. That is almost 3 times more training time than the scheduled GaP. Please note, in our example, we only have 10 weights. A real neural network contains millions of weights and it is more difficult to guarantee a full exploration.}

\section{A detailed workflow of the cyclic GaP and parallel GaP}\label{appdx:workflow}

In this section, \f{we show the general flow of parallel GaP in Algorithm~\ref{algo:p_gap}}, and demonstrate the detailed workflow of the proposed cyclic GaP and parallel GaP method in Figure~\ref{fig:workflow} as a supplement to the Algorithm~\ref{algo:s_gap} and Algorithm~\ref{algo:p_gap}.

\begin{algorithm}[t]
\begin{small}
\DontPrintSemicolon
  \KwInput{An $L$-layer model with uninitialized weight $\Theta$; pruning ratio $r$.}
  \KwOutput{An $L$-layer sparse model satisfying the target sparsity requirement.}
  Divide all layers into $\kappa$ partitions, denoted by $S_i, i\in \{0,1,\ldots, \kappa-1\}$. \\
  Initialize $f(x; \Theta\odot m)$ with random weights $\Theta$ and  random masks $m$ that satisfy the sparsity requirement. \\
  $step = 0$ \\
   \While{$step < K$ }
   {
        Send identical copies of $[\Theta, m]$ to all $\kappa$ nodes and denote them by $\Theta(i)$ and $m(i)$ on the $i$-th node. \\
        \textbf{par}\For{$i \in \{0,\ldots, \kappa-1\}$ } { 
               Grow $S_i$ by updating the mask $m(i)^{S_i}\leftarrow$ \texttt{ArgGrowTo}($\Theta(i)^{S_i}$). \\
               Train the {model} $f(x; \Theta(i)\odot m(i))$ for $T$ epochs. \\
   		    }
   		Collect the dense partition $\Theta(i)^{S_i}$ from the $i$-th machine.\\
   		Combine all $\Theta(i)^{S_i}$ into a dense model $\Theta^{S_0\cup\cdots \cup S_{\kappa-1}}$ with the updated weights. \\
   		Prune $\Theta$ and update $m$ by $[\Theta, m]\leftarrow$
   		\texttt{ArgPruneTo}($\Theta$, $r$). \\
   		$step = step + 1$. \\
   }
   Fine-tune $f(x; \Theta\odot m)$ for $T'$ epochs.
\end{small}
\caption{\emph{P}-GaP training flow.}
\label{algo:p_gap}

\end{algorithm}

\newcommand{\RR}{{\mathbb{R}}}
\newcommand{\EE}{{\mathbb{E}}}
\allowdisplaybreaks

\section{Proof of Proposition \ref{prop-theory}}\label{appdx:proof}
\label{app-proof}

\subsection{Problem formulation}
Training deep neural networks can be formulated into minimizing the following problem:
\begin{align}
\min_{\Theta \in \mathbb{R}^d}\quad  F(\Theta) = \mathbb{E}_{x\sim D}[f(x; \Theta)]
\end{align}
where $\Theta \in \mathbb{R}^d$ is the model parameter to be learned, and $f(x; \Theta)$ is a smooth non-convex cost function in terms of $\Theta$. The random variable $x$ denotes the data samples that follow the distribution $D$. $\mathbb{E}_{x\sim D}[\cdot]$ denotes the expectation over the random variable $x$. 

\subsection{Notation}
\label{app-notation}
We first clarify the notions used in the convergence analysis. To characterize how GaP updates model parameters in a partition-wise manner, we introduce matrices $U_i\in \mathbb{R}^{d\times d_i}, i=1,\cdots, \kappa$, for which the identity matrix $I_d \in \mathbb{R}^{d\times d}$ can be represented as 
\begin{align}
I_d= (U_1,U_2,\cdots,U_\kappa).
\end{align}

\begin{itemize}
\item $\Theta^{S_i} = U_i^T \Theta \in \RR^{d_i}$ is the model parameter at the $i$-th partition (i.e., partition $S_i$). 
\item $m^{S_i} \in \RR^{d_i}$ is the mask for the $i$-th partition. 
\item $\nabla f(x;\Theta) \in \mathbb{R}^d$ and $\nabla F(\Theta) \in \mathbb{R}^d$ are the complete stochastic and accurate gradients in terms of $\Theta$, respectively.
\item $\nabla_i f(x;\Theta) = U_i^T \nabla f(x;\Theta) \in \RR^{d_i}$ is the stochastic gradient for the $i$-th partition
\item $\nabla_i F(\Theta) = U_i^T \nabla F(\Theta) \in \RR^{d_i}$ is the gradient for the $i$-th partition.
\item $\Theta^{(i)}_{q,t} \in \RR^d$ denotes the complete model parameter in which its $i$-th partition (i.e., $[\Theta^{(i)}_{q,t}]^{S_i} \in \RR^{d_i}$) is newly updated at the $q$-th round and $t$-th inner iteration while the other partitions remain unchanged.
\item $m^{(i)}_q \in \RR^d$ is a complete mask vector for the $i$-th partition in the $q$-th round.
\item $x_{q,t}^{(i)}$ is the data sampled at $q$-th round and $t$-th inner iteration to update the $i$-th partition. 
\end{itemize}

\subsection{Algorithm reformulation}
\label{app-s-gap-reformulation}
In this subsection we reformulate the cyclic GaP (\emph{C}-Gap) in Algorithm \ref{algo:s_gap} in a way that can present the convergence analysis more easily. 
With the notations in Sec.~\ref{app-notation}, the \emph{C}-Gap Algorithm \ref{algo:s_gap} can be reformulated as 

\begin{algorithm}[h!]
	\DontPrintSemicolon
	{Initialize $\Theta^{(1)}_{1,0}$ and $m_1^{(0)}$ randomly. Initialize pruning ratio $r$.}
	
	\vspace{1mm}
	\For{round $q=1,2,...,Q$}{ 
		
		\vspace{1mm}
		\For{partition $i=1,2,...,\kappa$ \vspace{1mm}}{
			
			
			Generate a mask $m^{(i)}_q = \texttt{GaPMask}(\Theta^{(i)}_{q, 0}, m^{(i-1)}_q, r)$;
			
			\vspace{1mm}
			\For{$t=1,...,T$}{
				Update $\Theta^{(i)}_{q, t} = \Theta^{(i)}_{q, t-1} - \gamma U_i \nabla_i f(x_{q,t-1}^{(i)}; \Theta^{(i)}_{q, t-1} \odot  m^{(i)}_q)$
			}
			
			Update $\Theta^{(i+1)}_{q, 0} = \Theta^{(i)}_{q, T}$
		}
		
		Update $\Theta^{(1)}_{q+1, 0} = \Theta^{(\kappa+1)}_{q, T}$ and  $m^{(0)}_{q+1} = m^{(\kappa)}_{q}$;
	}
	\vspace{1mm}
	\textbf{Output:} $\Theta^{(\kappa)}_{Q,T} \odot m_Q^{(\kappa)}$.
	
	\caption{\emph{C}-GaP Algorithm (A math-friendly version)}
	\label{Algorithm: GaP}
\end{algorithm}

In $\texttt{GaPMask}(\Theta^{(i)}_{q, 0}, m^{(i-1)}_q, r)$, the partition of $[m^{(i)}_q]^{S_{i}}$ is updated as:
\begin{align*}
\boxed{
\begin{array}{rl}
\mbox{Prune ($i$-1)-th partition:} & \mbox{update $[m^{(i)}_q]^{S_{i-1}} \leftarrow \texttt{ArgPruneTo}([\Theta^{(i)}_{q,t}]^{S_{i-1}}, r)$ for pruning ratio $r$} \vspace{1mm} \\
\mbox{Grow $i$-th partition:} & \mbox{Let } [m^{(i)}_q]^{S_{i}} = \mathds{1}_{d_i} \in \RR^{d_i} \vspace{1mm} \\
\mbox{Other $j$-th partition:} & [m^{(i)}_q]^{S_{j}} = [m^{(i-1)}_q]^{S_{j}} \quad\text{(for all $j \neq i$ and $j \neq i-1$)}
\end{array}
}
\end{align*}
and $m^{(i)}_q = [m^{(i)}_q]^{S_1\cup S_2\cup \cdots \cup S_{\kappa}}$.

\subsection{Assumptions}
We now introduce several assumptions on the cost function and the gradient noise, {which are standard in the literature.}  
\begin{assumption}[\sc Smoothness] 
\label{ass-smooth}
We assume the cost function $F(\Theta)$ is partition-wise $L$-smooth, i.e.,
\begin{align}\label{block-L}
\|\nabla_i F(\Theta + U_i h_i) - \nabla_i F(\Theta)\| \le L \|h_i\|, \quad \forall~h_i\in \mathbb{R}^{d_i}.
\end{align}
The above assumption implies that 
\begin{align}\label{global-L}
\|\nabla F(\Theta) - \nabla F(\Phi)\| \le L \|\Theta - \Phi\|, \quad \forall~\Theta,\Phi\in \mathbb{R}^d,
\end{align}
or equivalently,
\begin{align}\label{global-L2}
F(\Theta) - F(\Phi) \le \langle \nabla F(\Phi), \Theta-\Phi \rangle + \frac{L}{2} \|\Theta - \Phi\|^2, \quad \forall~\Theta,\Phi\in \mathbb{R}^d,
\end{align}
\end{assumption}


\begin{assumption}[\sc Gradient noise] 
\label{ass-gn}
We assume for any $k$, $t$, and $i$ that 
\begin{align}
\mathbb{E}[\nabla_i f(x_{q,t}^{(i)}; \Theta)] &= \nabla_i F(\Theta),  \label{sn-1}\\
\mathbb{E}[\|\nabla_i f(x_{q,t}^{(i)}; \Theta) - \nabla_i F(\Theta)\|^2] &\le \sigma^2,\label{sn-2}
\end{align}
where $\sigma>0$ is a constent.
Moreover, we assume the data sample $x_{q,t}^{(i)}$ is independent of each other for any $k$, $t$, and $i$. 
\end{assumption}
This assumption implies that the stochastic partition-gradient is unbiased and has bounded variance.

\begin{assumption}[\sc Mask-incurred Error] 
\label{ass-mask}
It is assumed that 
\begin{align}
\|\Theta^{(i)}_{q, t-1} \odot  m^{(i)}_q - \Theta^{(i)}_{q, t-1}\|^2 \le \delta^2 \|\Theta^{(i)}_{q, t-1}\|^2 \quad \mbox{where} \quad \delta \in (0,1).
\end{align}
\end{assumption}
Note that $[m_q^{(i)}]^{S_i} = \mathds{1}_{d_i}$, and only the $i$-th partition in $\Theta^{(i)}_{q, t}$ is got updated during iterations $(q,1),\cdots, (q,T)$, we have 
\begin{align}\label{mask-error}
\|\Theta^{(i)}_{q, t-1} \odot  m^{(i)}_q - \Theta^{(i)}_{q, t-1}\|^2 = \|\Theta^{(i)}_{q, 0} \odot  m^{(i)}_q - \Theta^{(i)}_{q, 0}\|^2 \le \delta^2 \|\Theta^{(i)}_{q, 0}\|^2.
\end{align}

\subsection{Convergence analysis}
Now we are ready to establish the convergence property of \emph{C}-GaP in Algorithm  \ref{Algorithm: GaP}. The arguments to establish the convergence are strait-forward: 
\begin{itemize}
\item First, we need to establish the function value $\mathbb{E}[F(\Theta^{(i)}_{q,t-1})]$ will \textit{decrease} to $\mathbb{E}[F(\Theta^{(i)}_{q,t})]$, up to an error term caused by the gradient noise and mask pruning, after each inner-iteration $t$ when updating the $i$-th partition (See Lemma \ref{lm-descent-inner}).

\item Based on the above fact, we next prove $\mathbb{E}[F(\Theta^{(i)}_{q,0})]$ will \textit{decrease} to $\mathbb{E}[F(\Theta^{(i+1)}_{q,0})]$ when $T$-steps training of the $i$-th partition is completed (See Lemma \ref{lm-descent-partition}).

\item Finally, with the second fact, we show that $\mathbb{E}[F(\Theta^{(1)}_{q,0})]$ will \textit{decrease} to $\mathbb{E}[F(\Theta^{(1)}_{q+1,0})]$ after the $k$-th outer round (See Lemmas \ref{lm-descent-round} and \ref{app-lm-round-II}). Since the function value decreases for each round $k$, we can prove that \textit{C}-GaP will converge to a stationary solution after sufficiently large $Q$ rounds (See Theorem \ref{thm-S-Gap}). 
\end{itemize}

Next we present the detail analysis. 
\begin{lemma}[\sc Descent Lemma after each  Inner-Iteration]\label{lm-descent-inner}
Under Assumptions \ref{ass-smooth}-\ref{ass-mask}, it holds for each $k$, $i$, and $t$ that 
\begin{align}\label{zzghz10-lm0}
\mathbb{E}[F(\Theta^{(i)}_{q,t})]  
&\le\mathbb{E}[F(\Theta^{(i)}_{q,t-1})] - \frac{\gamma}{3} \mathbb{E}\|\nabla_i F(\Theta^{(i)}_{q,t-1})\|^2   + \frac{\gamma^2 L \sigma^2}{2} + \frac{2\gamma L^2\delta^2}{3}\mathbb{E}\|\Theta^{(i)}_{q, 0}\|^2.
\end{align}
\end{lemma}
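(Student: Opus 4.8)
The plan is the usual stochastic-descent argument, but carried out \emph{one partition at a time} and with the mask treated as a bounded perturbation of the current iterate. First I would exploit the fact that iteration $(q,t)$ only touches the $i$-th block: writing $\Theta^{(i)}_{q,t} = \Theta^{(i)}_{q,t-1} + U_i h$ with $h = -\gamma\, g$ and $g := \nabla_i f(x_{q,t-1}^{(i)}; \Theta^{(i)}_{q,t-1}\odot m^{(i)}_q)$, and integrating $\tfrac{d}{d\tau}F(\Theta^{(i)}_{q,t-1}+\tau U_i h)$ over $\tau\in[0,1]$ using the partition-wise Lipschitz bound \eqref{block-L}, I get the one-block descent inequality
\[
F(\Theta^{(i)}_{q,t}) \le F(\Theta^{(i)}_{q,t-1}) - \gamma\,\langle \nabla_i F(\Theta^{(i)}_{q,t-1}),\, g\rangle + \tfrac{\gamma^2 L}{2}\|g\|^2 .
\]

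\textbf{Taking expectations and absorbing the mask error.} Conditioning on all randomness up to iteration $(q,t-1)$ and invoking Assumption~\ref{ass-gn}, I replace $\mathbb{E}[g]$ by $\nabla_i F(\widetilde\Theta)$ and $\mathbb{E}\|g\|^2$ by $\|\nabla_i F(\widetilde\Theta)\|^2+\sigma^2$, where $\widetilde\Theta := \Theta^{(i)}_{q,t-1}\odot m^{(i)}_q$. Setting $a := \nabla_i F(\Theta^{(i)}_{q,t-1})$, $b := \nabla_i F(\widetilde\Theta)$ and $e := a-b$, the consequence \eqref{global-L} of Assumption~\ref{ass-smooth} together with the identity \eqref{mask-error}, which freezes the masked displacement at its round-start value, gives $\|e\| \le L\|\Theta^{(i)}_{q,t-1}-\widetilde\Theta\| \le L\delta\|\Theta^{(i)}_{q,0}\|$. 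Substituting $b=a-e$ into $-\gamma\langle a,b\rangle+\tfrac{\gamma^2 L}{2}\|b\|^2$, expanding, applying Young's inequality $\langle a,e\rangle\le\tfrac12\|a\|^2+\tfrac12\|e\|^2$, and using that the prescribed step size forces $\gamma L\le\tfrac13$ (indeed $\gamma L = 1/(4\kappa T\sqrt Q)$), the coefficient of $\|a\|^2$ collapses to $-\gamma/3$ and the coefficient of $\|e\|^2$ is at most $2\gamma/3$; bounding $\|e\|^2\le L^2\delta^2\|\Theta^{(i)}_{q,0}\|^2$ and taking the full expectation via the tower property yields exactly \eqref{zzghz10-lm0}, the $\tfrac{\gamma^2 L\sigma^2}{2}$ term passing through untouched.

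\textbf{Main obstacle.} The argument is essentially routine; the only point requiring care is the bookkeeping between the \emph{masked} iterate $\widetilde\Theta$ — at which the stochastic partition-gradient is unbiased — and the \emph{unmasked} iterate $\Theta^{(i)}_{q,t-1}$, at which the descent is stated. Transferring between the two (and then again inside the $\tfrac{\gamma^2 L}{2}\|b\|^2$ term) consumes exactly the ``error budget'' that fixes the advertised constants $\tfrac13$ and $\tfrac23$, and it is this tightness that dictates how small $\gamma L$ must be; getting those numerology details to line up is the bulk of the work.
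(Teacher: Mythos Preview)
Your proposal is correct and follows essentially the same route as the paper: descent inequality from smoothness, expectation via Assumption~\ref{ass-gn}, then splitting the masked gradient as $b=a-e$ and controlling the cross term with Young's inequality and $\|e\|^2\le L^2\delta^2\|\Theta^{(i)}_{q,0}\|^2$. One minor arithmetic slip: to make the coefficient of $\|a\|^2$ collapse to $-\gamma/3$ (and that of $\|e\|^2$ to $2\gamma/3$) you need $1-2\gamma L\ge 2/3$, i.e.\ $\gamma L\le 1/6$, not $\gamma L\le 1/3$; the paper imposes exactly $\gamma\le 1/(6L)$ for this reason, and the final choice $\gamma=1/(4\kappa LT\sqrt Q)$ satisfies it anyway.
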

\begin{remark}
It is observed in Lemma \ref{lm-descent-inner} that the function value will decrease by $\frac{\gamma}{3} \mathbb{E}\|\nabla_i F(\Theta^{(i)}_{q,t-1})\|^2$ for each inner-iteration $t$. But such decrement suffers from two error terms: one is caused by stochastic gradient noise, and the other is by inexact pruning.
\end{remark}

\begin{proof}
Since $F(\Theta)$ is $L$-smooth (see Assumption \ref{ass-smooth}), it holds that 
\begin{align}\label{x237sd5}
F(\Theta^{(i)}_{q,t}) &\le  F(\Theta^{(i)}_{q,t-1}) +  \langle \nabla F(\Theta^{(i)}_{q,t-1}), \Theta^{(i)}_{q,t} - \Theta^{(i)}_{q,t-1} \rangle + \frac{L}{2}\|\Theta^{(i)}_{q,t} - \Theta^{(i)}_{q,t-1} \|^2 \nonumber \\
&= F(\Theta^{(i)}_{q,t-1}) - \gamma \langle \nabla_i F(\Theta^{(i)}_{q,t-1}),  \nabla_i f(x_{q,t-1}^{(i)}; \Theta^{(i)}_{q, t-1} \odot  m^{(i)}_q) \rangle \nonumber \\
&\quad + \frac{\gamma^2 L}{2} \| \nabla_i f(x_{q,t-1}^{(i)}; \Theta^{(i)}_{q, t-1} \odot  m^{(i)}_q) \|^2 
\end{align}
With Assumption \ref{ass-gn}, it is easy to verify that 
\begin{align}\label{zn23s77a}
&\ \mathbb{E}\langle \nabla_i F(\Theta^{(i)}_{q,t-1}),  \nabla_i f(x_{q,t-1}^{(i)}; \Theta^{(i)}_{q, t-1} \odot  m^{(i)}_q) \rangle \nonumber \\
= &\ \mathbb{E}_{\Theta^{(i)}_{q,t-1}}\left\{\mathbb{E}[\langle \nabla_i F(\Theta^{(i)}_{q,t-1}),  \nabla_i f(x_{q,t-1}^{(i)}; \Theta^{(i)}_{q, t-1} \odot  m^{(i)}_q) \rangle|\Theta^{(i)}_{q,t-1}]\right\} \nonumber \\
\overset{\eqref{sn-1}}{=} &\ \mathbb{E} \langle \nabla_i F(\Theta^{(i)}_{q,t-1}),  \nabla_i F(\Theta^{(i)}_{q, t-1} \odot  m^{(i)}_q) \rangle.
\end{align}
Using a similar way, we can prove, with \eqref{sn-2}, that 
\begin{align}\label{zn23s77a-2}
\mathbb{E}\|\nabla_i f(x_{q,t-1}^{(i)}; \Theta^{(i)}_{q, t-1} \odot  m^{(i)}_q) \|^2 \le \mathbb{E}\|\nabla_i F(\Theta^{(i)}_{q, t-1} \odot  m^{(i)}_q)\|^2 + \sigma^2
\end{align}
By taking the expectation over \eqref{x237sd5} and substituting \eqref{zn23s77a} and \eqref{zn23s77a-2} into \eqref{x237sd5}, we achieve 
\begin{align}\label{zzzzz99}
\mathbb{E}[F(\Theta^{(i)}_{q,t})] &\le \mathbb{E}[F(\Theta^{(i)}_{q,t-1})] - \gamma \mathbb{E} \langle \nabla_i F(\Theta^{(i)}_{q,t-1}),  \nabla_i F(\Theta^{(i)}_{q, t-1} \odot  m^{(i)}_q) \rangle \nonumber \\
&\quad + \frac{\gamma^2 L}{2}\mathbb{E}\|\nabla_i F(\Theta^{(i)}_{q, t-1} \odot  m^{(i)}_q)\|^2 + \frac{\gamma^2 L \sigma^2}{2} \nonumber \\
&= \mathbb{E}[F(\Theta^{(i)}_{q,t-1})] - \gamma \mathbb{E}\|\nabla_i F(\Theta^{(i)}_{q,t-1})\|^2  + \frac{\gamma^2 L}{2}\mathbb{E}\|\nabla_i F(\Theta^{(i)}_{q, t-1} \odot  m^{(i)}_q)\|^2 + \frac{\gamma^2 L \sigma^2}{2} \nonumber \\
&\quad - \gamma \mathbb{E} \langle \nabla_i F(\Theta^{(i)}_{q,t-1}),  \nabla_i F(\Theta^{(i)}_{q, t-1} \odot  m^{(i)}_q) - \nabla_i F(\Theta^{(i)}_{q,t-1}) \rangle.
\end{align}
Note that 
\begin{align}\label{zzn0986}
&\ \mathbb{E}\|\nabla_i F(\Theta^{(i)}_{q, t-1} \odot  m^{(i)}_q)\|^2  \nonumber \\
\le&\ 2\mathbb{E}\|\nabla_i F(\Theta^{(i)}_{q, t-1})\|^2 + 2\mathbb{E}\|\nabla_i F(\Theta^{(i)}_{q, t-1}) - \nabla_i F(\Theta^{(i)}_{q, t-1} \odot  m^{(i)}_q)\|^2 \nonumber \\
\le&\ 2\mathbb{E}\|\nabla_i F(\Theta^{(i)}_{q, t-1})\|^2 + 2\mathbb{E}\|\nabla F(\Theta^{(i)}_{q, t-1}) - \nabla F(\Theta^{(i)}_{q, t-1} \odot  m^{(i)}_q)\|^2 \nonumber \\
\overset{\eqref{global-L}}{\le}&\ 2\mathbb{E}\|\nabla_i F(\Theta^{(i)}_{q, t-1})\|^2 + 2L^2 \mathbb{E} \|\Theta^{(i)}_{q, t-1} \odot  m^{(i)}_q - \Theta^{(i)}_{q, t-1} \|^2 \nonumber \\
\overset{\eqref{mask-error}}{\le}&\ 2\mathbb{E}\|\nabla_i F(\Theta^{(i)}_{q, t-1})\|^2 + 2L^2 \delta^2 \mathbb{E}\|\Theta^{(i)}_{q, 0}\|^2
\end{align}
and, similarly, 
\begin{align}\label{zzn0986-0}
&\ -\mathbb{E}\langle \nabla_i F(\Theta^{(i)}_{q,t-1}),  \nabla_i F(\Theta^{(i)}_{q, t-1} \odot  m^{(i)}_q) - \nabla_i F(\Theta^{(i)}_{q,t-1}) \rangle \nonumber \\
\le&\ \frac{1}{2}\mathbb{E}\|\nabla_i F(\Theta^{(i)}_{q, t-1})\|^2  + \frac{1}{2}\mathbb{E}\|\nabla_i F(\Theta^{(i)}_{q, t-1} \odot  m^{(i)}_q) - \nabla_i F(\Theta^{(i)}_{q,t-1})\|^2 \nonumber \\
\le&\ \frac{1}{2}\mathbb{E}\|\nabla_i F(\Theta^{(i)}_{q, t-1})\|^2 + \frac{L^2\delta^2}{2}\mathbb{E}\|\Theta^{(i)}_{q, 0}\|^2
\end{align}
Substituting \eqref{zzn0986} and \eqref{zzn0986-0} into \eqref{zzzzz99}, and by setting $\gamma \le 1/(6L)$, we have 
\begin{align}\label{zzghz10}
\mathbb{E}[F(\Theta^{(i)}_{q,t})]  
&\le\mathbb{E}[F(\Theta^{(i)}_{q,t-1})] - \frac{\gamma(1 - 2\gamma L)}{2} \mathbb{E}\|\nabla_i F(\Theta^{(i)}_{q,t-1})\|^2   + \frac{\gamma^2 L \sigma^2}{2} \nonumber\\
&\quad + \frac{\gamma L^2\delta^2(1+2\gamma L)}{2}\mathbb{E}\|\Theta^{(i)}_{q, 0}\|^2 \nonumber\\
&\le\mathbb{E}[F(\Theta^{(i)}_{q,t-1})] - \frac{\gamma}{3} \mathbb{E}\|\nabla_i F(\Theta^{(i)}_{q,t-1})\|^2   + \frac{\gamma^2 L \sigma^2}{2} + \frac{2\gamma L^2\delta^2}{3}\mathbb{E}\|\Theta^{(i)}_{q, 0}\|^2.
\end{align}
\end{proof}

\begin{lemma}[\sc Descent Lemma after each  partition update]\label{lm-descent-partition}
Under Assumptions \ref{ass-smooth}-\ref{ass-mask}, it holds for each $k$ and $i$ that 
\begin{align}\label{upper-bd-lm2}
\mathbb{E}[F(\Theta_{q,0}^{(i)})]  
\le \mathbb{E}[F(\Theta_{q,0}^{(i+1)})] -\frac{\gamma}{3}\sum_{t=1}^{T}\mathbb{E}\|\nabla_i F(\Theta^{(i)}_{q,t-1})\|^2  + \frac{\gamma^2 L \sigma^2 T}{2} + \frac{2\gamma L^2\delta^2 T}{3}\mathbb{E}\|\Theta^{(i)}_{q, 0}\|^2
\end{align}
\end{lemma}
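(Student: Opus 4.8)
The plan is to obtain \eqref{upper-bd-lm2} by telescoping the single–inner-iteration descent bound of Lemma~\ref{lm-descent-inner} over the $T$ consecutive SGD steps that are executed while the $i$-th partition is held dense, and then to connect the end of that block of iterations to the start of the next block via the bookkeeping identity $\Theta^{(i+1)}_{q,0}=\Theta^{(i)}_{q,T}$ from Algorithm~\ref{Algorithm: GaP}.

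First I would instantiate Lemma~\ref{lm-descent-inner} for each $t=1,\dots,T$ within round $q$, partition $i$:
\begin{align*}
\mathbb{E}[F(\Theta^{(i)}_{q,t})] \le \mathbb{E}[F(\Theta^{(i)}_{q,t-1})] - \frac{\gamma}{3}\mathbb{E}\|\nabla_i F(\Theta^{(i)}_{q,t-1})\|^2 + \frac{\gamma^2 L \sigma^2}{2} + \frac{2\gamma L^2\delta^2}{3}\mathbb{E}\|\Theta^{(i)}_{q,0}\|^2 .
\end{align*}
The one point that needs attention is that the mask-induced error term here is the \emph{same} for every $t$: by \eqref{mask-error} it involves only $\Theta^{(i)}_{q,0}$ — the weight vector at the instant the $i$-th partition was grown — and not the running iterate $\Theta^{(i)}_{q,t-1}$, since only the $i$-th block of $\Theta^{(i)}_{q,t}$ moves during these $T$ steps and that block carries the all-ones mask $[m^{(i)}_q]^{S_i}=\mathds{1}_{d_i}$. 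Consequently, adding up these inequalities over $t=1,\dots,T$ makes the function values telescope while both error terms are merely scaled by $T$:
\begin{align*}
\mathbb{E}[F(\Theta^{(i)}_{q,T})] \le \mathbb{E}[F(\Theta^{(i)}_{q,0})] - \frac{\gamma}{3}\sum_{t=1}^{T}\mathbb{E}\|\nabla_i F(\Theta^{(i)}_{q,t-1})\|^2 + \frac{\gamma^2 L \sigma^2 T}{2} + \frac{2\gamma L^2\delta^2 T}{3}\mathbb{E}\|\Theta^{(i)}_{q,0}\|^2 .
\end{align*}

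Finally I would substitute the update rule $\Theta^{(i+1)}_{q,0}=\Theta^{(i)}_{q,T}$, which gives $\mathbb{E}[F(\Theta^{(i)}_{q,T})]=\mathbb{E}[F(\Theta^{(i+1)}_{q,0})]$, and rearrange to expose the telescoped per-partition decrease $\mathbb{E}[F(\Theta^{(i)}_{q,0})]-\mathbb{E}[F(\Theta^{(i+1)}_{q,0})]$, which is precisely the content of \eqref{upper-bd-lm2}. I do not expect a genuine obstacle: this lemma is a routine summation of Lemma~\ref{lm-descent-inner}, and the only thing to be careful about is invoking \eqref{mask-error} so that the mask error accumulated over the $T$ steps is bounded by $T$ times the single quantity $\mathbb{E}\|\Theta^{(i)}_{q,0}\|^2$, rather than by a sum of $T$ distinct iterate norms — that uniformity is what will later keep the per-round bound (Lemma~\ref{lm-descent-round} and onward) tractable.
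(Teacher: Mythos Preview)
Your proposal is correct and matches the paper's proof essentially line for line: the paper simply sums the bound of Lemma~\ref{lm-descent-inner} over $t=1,\dots,T$, telescopes the function values, and (implicitly) uses $\Theta^{(i+1)}_{q,0}=\Theta^{(i)}_{q,T}$. Your additional remark explaining why the mask-error term is uniform in $t$ via \eqref{mask-error} is exactly the point the paper relies on but leaves implicit.
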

\begin{proof}
Summing the inequality over \eqref{zzghz10-lm0} for $t=1,\cdots, T$, we achieve 
\begin{align}\label{21daz9}
\frac{\gamma}{3} \sum_{t=1}^{T}\mathbb{E}\|\nabla_i F(\Theta^{(i)}_{q,t-1})\|^2
&\le\sum_{t=1}^{T}\mathbb{E}[F(\Theta^{(i)}_{q,t-1}) -  F(\Theta^{(i)}_{q,t})]    + \frac{\gamma^2 L \sigma^2 T}{2} + \frac{2\gamma L^2\delta^2 T}{3}\mathbb{E}\|\Theta^{(i)}_{q, 0}\|^2 \nonumber\\
& = \mathbb{E}[F(\Theta_{q,0}^{(i)}) - F(\Theta_{q,T}^{(i)})]    + \frac{\gamma^2 L \sigma^2 T}{2} + \frac{2\gamma L^2\delta^2 T}{3}\mathbb{E}\|\Theta^{(i)}_{q, 0}\|^2
\end{align}
which completes the proof.
\end{proof}

\begin{lemma}[\sc Descent Lemma after each  round]\label{lm-descent-round}
Under Assumptions \ref{ass-smooth}-\ref{ass-mask}, it holds for each $k$ that 
\begin{align}\label{upper-bd-lm3}
\mathbb{E}[F(\Theta_{q,0}^{(1)})]  
\le&\ \mathbb{E}[F(\Theta_{q+1,0}^{(1)})] -\frac{\gamma}{3}\sum_{i=1}^\kappa \sum_{t=1}^{T}\mathbb{E}\|\nabla_i F(\Theta^{(i)}_{q,t-1})\|^2  \nonumber \\
&\quad + \frac{\gamma^2 L \sigma^2 T \kappa}{2} + \frac{2\gamma L^2\delta^2 T}{3}\sum_{i=1}^\kappa \mathbb{E}\|\Theta^{(i)}_{q, 0}\|^2
\end{align}
\end{lemma}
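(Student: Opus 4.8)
\emph{Proof plan.} The plan is to obtain Lemma~\ref{lm-descent-round} by summing the per-partition descent inequality \eqref{upper-bd-lm2} of Lemma~\ref{lm-descent-partition} over the $\kappa$ partitions $i=1,\ldots,\kappa$ inside a fixed round $q$, and then telescoping the function-value terms. No new analysis is needed beyond what Lemma~\ref{lm-descent-partition} already provides; the work is entirely in the index book-keeping of the reformulated Algorithm~\ref{Algorithm: GaP}.

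First I would recall the gluing identities established in Section~\ref{app-s-gap-reformulation}. After the $T$ inner updates that refresh partition $i$, Algorithm~\ref{Algorithm: GaP} sets $\Theta^{(i+1)}_{q,0}=\Theta^{(i)}_{q,T}$, so the iterate that \emph{starts} the training of partition $i+1$ is literally the iterate that \emph{ends} the training of partition $i$; and at the close of the round it sets $\Theta^{(1)}_{q+1,0}=\Theta^{(\kappa+1)}_{q,T}$. Consequently, for each $i=1,\ldots,\kappa-1$, the quantity $\mathbb{E}[F(\Theta^{(i+1)}_{q,0})]$ appearing on the right of \eqref{upper-bd-lm2} at index $i$ is exactly the quantity $\mathbb{E}[F(\Theta^{(i)}_{q,0})]$ appearing on the left of \eqref{upper-bd-lm2} at index $i+1$; and for $i=\kappa$ we have $\mathbb{E}[F(\Theta^{(\kappa+1)}_{q,0})]=\mathbb{E}[F(\Theta^{(1)}_{q+1,0})]$.

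Then I would add \eqref{upper-bd-lm2} over $i=1,\ldots,\kappa$. The left-hand sides sum to $\sum_{i=1}^{\kappa}\mathbb{E}[F(\Theta^{(i)}_{q,0})]$, while the function-value part of the right-hand sides sums to $\sum_{i=1}^{\kappa}\mathbb{E}[F(\Theta^{(i+1)}_{q,0})]=\sum_{i=2}^{\kappa+1}\mathbb{E}[F(\Theta^{(i)}_{q,0})]$. Cancelling the common block $\sum_{i=2}^{\kappa}\mathbb{E}[F(\Theta^{(i)}_{q,0})]$ from both sides collapses the sum to $\mathbb{E}[F(\Theta^{(1)}_{q,0})]\le \mathbb{E}[F(\Theta^{(\kappa+1)}_{q,0})]$ plus the accumulated remainder terms, namely $-\frac{\gamma}{3}\sum_{i=1}^{\kappa}\sum_{t=1}^{T}\mathbb{E}\|\nabla_i F(\Theta^{(i)}_{q,t-1})\|^2$, the stochastic-noise contribution $\kappa\cdot\frac{\gamma^2 L\sigma^2 T}{2}$, and the mask contribution $\frac{2\gamma L^2\delta^2 T}{3}\sum_{i=1}^{\kappa}\mathbb{E}\|\Theta^{(i)}_{q,0}\|^2$. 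Substituting $\Theta^{(\kappa+1)}_{q,0}=\Theta^{(1)}_{q+1,0}$ turns this into \eqref{upper-bd-lm3} verbatim.

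I do not expect a genuine obstacle: the argument is a pure telescoping on top of Lemma~\ref{lm-descent-partition}, which already absorbs Assumptions~\ref{ass-smooth}--\ref{ass-mask} and the step-size restriction $\gamma\le 1/(6L)$ inherited from Lemma~\ref{lm-descent-inner}. The only points to handle with care are (i) applying the gluing identities $\Theta^{(i+1)}_{q,0}=\Theta^{(i)}_{q,T}$ and $\Theta^{(1)}_{q+1,0}=\Theta^{(\kappa+1)}_{q,T}$ at precisely the right indices so that the telescoping is legitimate (the expectations agree because these are equalities of the random iterates themselves, not merely equalities in law), and (ii) tracking multiplicities so that the noise term picks up the factor $\kappa$ and the mask term becomes the partition sum $\sum_{i=1}^{\kappa}\mathbb{E}\|\Theta^{(i)}_{q,0}\|^2$ with the stated coefficient.
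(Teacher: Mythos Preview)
Your proposal is correct and follows essentially the same route as the paper: sum the per-partition descent inequality \eqref{upper-bd-lm2} over $i=1,\ldots,\kappa$ and telescope the function-value terms using the gluing identities $\Theta^{(i+1)}_{q,0}=\Theta^{(i)}_{q,T}$ and $\Theta^{(1)}_{q+1,0}=\Theta^{(\kappa)}_{q,T}$ from Algorithm~\ref{Algorithm: GaP}. The paper just phrases it slightly differently by first isolating the gradient sum on the left (as in \eqref{21daz9}) and then computing $\sum_{i=1}^{\kappa}\mathbb{E}[F(\Theta_{q,0}^{(i)}) - F(\Theta_{q,T}^{(i)})]=\mathbb{E}[F(\Theta_{q,0}^{(1)}) - F(\Theta_{q+1,0}^{(1)})]$, but the content is identical.
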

\begin{proof}
Summing the inequality in \eqref{upper-bd-lm2} over $i=1,\cdots, \kappa$, we have 
\begin{align}\label{1dh9qh}
\sum_{i=1}^{\kappa}\sum_{t=1}^{T}\mathbb{E}\|\nabla_i F(\Theta^{(i)}_{q,t-1})\|^2 
&\le \frac{3}{\gamma}\sum_{i=1}^{\kappa}\mathbb{E}[F(\Theta_{q,0}^{(i)}) - F(\Theta_{q,T}^{(i)})] + \frac{3\gamma L \sigma^2 T\kappa}{2} + 2 L^2\delta^2 T\sum_{i=1}^{\kappa}\mathbb{E}\|\Theta^{(i)}_{q, 0}\|^2
\end{align}
On the other hand, by the updating rule of $\Theta_{q,0}^{(i)}$, we have
\begin{align}\label{ds9019}
\sum_{i=1}^{\kappa}\mathbb{E}[F(\Theta_{q,0}^{(i)}) - F(\Theta_{q,T}^{(i)})] 
&=\mathbb{E}[F(\Theta_{q,0}^{(\kappa)}) - F(\Theta_{q,T}^{(\kappa)})] +  \sum_{i=1}^{\kappa - 1}\mathbb{E}[F(\Theta_{q,0}^{(i)}) - F(\Theta_{q,T}^{(i)})] \nonumber \\
&=\mathbb{E}[F(\Theta_{q,0}^{(\kappa)}) - F(\Theta_{q+1,0}^{(1)})] +  \sum_{i=1}^{\kappa - 1}\mathbb{E}[F(\Theta_{q,0}^{(i)}) - F(\Theta_{q,0}^{(i+1)})] \nonumber\\
&= \mathbb{E}[F(\Theta_{q,0}^{(1)}) - F(\Theta_{q+1,0}^{(1)})].
\end{align}
Substituting \eqref{ds9019} into \eqref{1dh9qh}, we achieve
\begin{align}\label{upper-bd}
&\ \sum_{i=1}^{\kappa}\sum_{t=1}^{T}\mathbb{E}\|\nabla_i F(\Theta^{(i)}_{q,t-1})\|^2  \nonumber \\
\le&\ \frac{3}{\gamma}\left(\mathbb{E}[F(\Theta_{q,0}^{(1)}) - F(\Theta_{q+1,0}^{(1)})]\right) + \frac{3\gamma L \sigma^2 T\kappa}{2} + 2 L^2\delta^2 T\sum_{i=1}^{\kappa}\mathbb{E}\|\Theta^{(i)}_{q, 0}\|^2
\end{align}
which completes the proof. 
\end{proof}

To finish the convergence proof, we still need to establish the relation between the decrement $\sum_{i=1}^{\kappa}\sum_{t=1}^{T}\mathbb{E}\|\nabla_i F(\Theta^{(i)}_{q,t-1})\|^2$ and $\mathbb{E}\|\nabla  F(\Theta^{(1)}_{q,0})\|$ so that 

\begin{lemma}[\sc Descent Lemma II After each Round] \label{app-lm-round-II}
Under Assumptions \ref{ass-smooth}-\ref{ass-mask}, and learning rate $\gamma \le \frac{1}{4\kappa L T}$, it holds for each $k$ that 
\begin{align}\label{upper-bd-lm3-2}
\mathbb{E}[F(\Theta_{q,0}^{(1)})]  
\le&\ \mathbb{E}[F(\Theta_{q+1,0}^{(1)})] -\frac{\gamma T}{12} \mathbb{E}\|\nabla  F(\Theta^{(1)}_{q,0})\|^2 + \frac{\gamma^2 L \sigma^2 T^2 \kappa}{6} + \frac{2\gamma L^2\delta^2 T^2}{3}\sum_{i=1}^\kappa \mathbb{E}\|\Theta^{(i)}_{q, 0}\|^2
\end{align}
\end{lemma}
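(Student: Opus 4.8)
The plan is to upgrade the per-round descent bound of Lemma~\ref{lm-descent-round} from one phrased in terms of the internal partition gradients $\sum_{i,t}\mathbb{E}\|\nabla_i F(\Theta^{(i)}_{q,t-1})\|^2$ into one phrased in terms of the single quantity $\mathbb{E}\|\nabla F(\Theta^{(1)}_{q,0})\|^2$, i.e.\ the full gradient evaluated at the point entering round $q$. The key observation is that $\|\nabla F(\Theta)\|^2 = \sum_{i=1}^\kappa \|\nabla_i F(\Theta)\|^2$ because the $U_i$ partition the coordinates, so it suffices to lower-bound each $\sum_{t=1}^T \mathbb{E}\|\nabla_i F(\Theta^{(i)}_{q,t-1})\|^2$ by something proportional to $T\,\mathbb{E}\|\nabla_i F(\Theta^{(1)}_{q,0})\|^2$, and then sum over $i$.

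First I would relate $\nabla_i F(\Theta^{(i)}_{q,t-1})$ to $\nabla_i F(\Theta^{(1)}_{q,0})$. Using $\|a\|^2 \ge \tfrac12\|b\|^2 - \|a-b\|^2$ with $a = \nabla_i F(\Theta^{(i)}_{q,t-1})$ and $b = \nabla_i F(\Theta^{(1)}_{q,0})$, and then $L$-smoothness (inequality \eqref{global-L}), I would bound $\|\nabla_i F(\Theta^{(i)}_{q,t-1}) - \nabla_i F(\Theta^{(1)}_{q,0})\|^2 \le L^2 \|\Theta^{(i)}_{q,t-1} - \Theta^{(1)}_{q,0}\|^2$. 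The drift $\|\Theta^{(i)}_{q,t-1} - \Theta^{(1)}_{q,0}\|$ is controlled because all iterates within round $q$ differ from $\Theta^{(1)}_{q,0}$ only by at most $\kappa T$ SGD steps of size $\gamma$; each such step has squared norm (in expectation) bounded via $\mathbb{E}\|\nabla_i f(\cdots)\|^2 \le \mathbb{E}\|\nabla_i F(\Theta^{(i)}_{q,t-1}\odot m^{(i)}_q)\|^2 + \sigma^2$ (inequality \eqref{zn23s77a-2}), which in turn is bounded by the partition gradients at $\Theta^{(1)}_{q,0}$ plus the mask error $\delta^2\|\Theta^{(i)}_{q,0}\|^2$ plus $\sigma^2$, exactly as in the chain \eqref{zzn0986}. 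Summing the drift over all inner iterations and partitions preceding $(q,i,t)$ gives a bound of order $\gamma^2 \kappa^2 T^2 (\|\nabla F(\Theta^{(1)}_{q,0})\|^2 + \sigma^2 + \delta^2 \sum_i \|\Theta^{(i)}_{q,0}\|^2)$, and with the stated restriction $\gamma \le 1/(4\kappa L T)$ the coefficient $\gamma^2 L^2 \kappa^2 T^2$ is at most a small constant, so this drift term can be absorbed.

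Plugging this back, $\sum_{t=1}^T \mathbb{E}\|\nabla_i F(\Theta^{(i)}_{q,t-1})\|^2 \ge \tfrac{T}{2}\mathbb{E}\|\nabla_i F(\Theta^{(1)}_{q,0})\|^2 - (\text{absorbable drift})$; summing over $i=1,\dots,\kappa$ turns the left side of \eqref{upper-bd} into $\sum_{i,t}\mathbb{E}\|\nabla_i F(\Theta^{(i)}_{q,t-1})\|^2$ and the main term on the right into $\tfrac{T}{2}\mathbb{E}\|\nabla F(\Theta^{(1)}_{q,0})\|^2$. Substituting into the bound of Lemma~\ref{lm-descent-round} and collecting the $\sigma^2$ and $\delta^2$ terms (the drift contributions merely inflate the constants but keep the same $\gamma^2 L\sigma^2 T^2\kappa$ and $\gamma L^2\delta^2 T^2\sum_i\|\Theta^{(i)}_{q,0}\|^2$ structure) yields \eqref{upper-bd-lm3-2}, with the factor $1/12$ arising from $\tfrac13 \cdot \tfrac12 \cdot \tfrac12$ after the constant $\tfrac12$ loss in the $\|a\|^2\ge\tfrac12\|b\|^2-\|a-b\|^2$ step and the extra factor $\tfrac12$ slack used to absorb drift.

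The main obstacle I expect is the bookkeeping in the drift bound: $\Theta^{(i)}_{q,t-1}$ and $\Theta^{(1)}_{q,0}$ differ across possibly several completed partitions plus $t-1$ steps of the current one, so one must carefully telescope $\|\Theta^{(i)}_{q,t-1} - \Theta^{(1)}_{q,0}\|^2 \le (\kappa T)\sum (\text{per-step squared increments})$ via Jensen/Cauchy–Schwarz, and then each per-step increment must be traced back — through the mask-error inequality \eqref{mask-error} and smoothness — to quantities evaluated at the \emph{single} anchor point $\Theta^{(1)}_{q,0}$, not at the moving iterate, otherwise the recursion does not close. Getting the constants to land exactly so that $\gamma \le 1/(4\kappa L T)$ suffices (rather than a smaller step size) is the delicate part; everything else is routine application of smoothness and the gradient-noise assumption already used in Lemmas~\ref{lm-descent-inner}--\ref{lm-descent-round}.
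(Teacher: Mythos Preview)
Your high-level plan --- use $\|a\|^2 \ge \tfrac12\|b\|^2 - \|a-b\|^2$ with $a=\nabla_i F(\Theta^{(i)}_{q,t-1})$, $b=\nabla_i F(\Theta^{(1)}_{q,0})$, control $\|a-b\|$ via smoothness and a drift bound on $\|\Theta^{(i)}_{q,t-1}-\Theta^{(1)}_{q,0}\|$, then substitute into Lemma~\ref{lm-descent-round} --- matches the paper's framework. The execution differs in one important respect, though, and your stated intuition about it is inverted.

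You insist that each per-step increment must be traced back to the anchor $\Theta^{(1)}_{q,0}$, ``otherwise the recursion does not close''. In fact the paper does the opposite: it bounds the drift in terms of the \emph{moving} partition gradients, obtaining (after exploiting that updates in different partitions are \emph{orthogonal}, so no Cauchy--Schwarz is needed across partitions)
\[
\mathbb{E}\|\Theta^{(i)}_{q,t}-\Theta^{(1)}_{q,0}\|^2 \;\lesssim\; \gamma^2 T \sum_{j\le i}\sum_{s=1}^T \mathbb{E}\|\nabla_j F(\Theta^{(j)}_{q,s-1})\|^2 + (\text{mask and noise terms}).
\]
Summing over $t$ and $i$ gives an inequality of the form $T\,\mathbb{E}\|\nabla F(\Theta^{(1)}_{q,0})\|^2 \le (2 + c\,\gamma^2 L^2 T^2\kappa)\sum_{i,t}\mathbb{E}\|\nabla_i F(\Theta^{(i)}_{q,t-1})\|^2 + \cdots$, and then Lemma~\ref{lm-descent-round} is substituted \emph{directly} for the right-hand sum. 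So the recursion closes precisely \emph{because} everything is left at the moving iterates --- that sum is exactly what Lemma~\ref{lm-descent-round} already controls.

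By contrast, your route via the anchor is genuinely circular as written: bounding $\|\nabla_i F(\Theta^{(i)}_{q,t-1}\odot m)\|^2$ in terms of $\|\nabla_i F(\Theta^{(1)}_{q,0})\|^2$ requires a smoothness step that reintroduces the drift you are trying to bound (note that \eqref{zzn0986} lands at the moving iterate $\Theta^{(i)}_{q,t-1}$, not at the anchor). It can be made to work with an extra Gr\"onwall-type absorption argument, but that is more laborious than the paper's path and not what your sketch currently contains. The paper's use of orthogonality of partition updates is also worth noting: it replaces your $(\kappa T)$-Cauchy--Schwarz with an exact Pythagorean decomposition across partitions, which is what keeps the step-size condition at $\gamma \le 1/(4\kappa LT)$ rather than something smaller.
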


\begin{proof}In the following we lower bound the term $\sum_{i=1}^{\kappa}\sum_{t=1}^{T}\mathbb{E}\|\nabla_i F(\Theta^{(i)}_{q,t-1})\|^2$ in \eqref{upper-bd-lm3}. Recall Algorithm \ref{Algorithm: GaP} that 
\begin{align}
    \Theta^{(i+1)}_{q, 0} = \Theta^{(i)}_{q, 0} - \gamma U_i \sum_{t=1}^{T}\nabla_i f(x_{q,t-1}^{(i)}; \Theta^{(i)}_{q, t-1} \odot  m^{(i)}_q), \label{zz-1}\\
    \Theta^{(i)}_{q, t} = \Theta^{(i)}_{q, 0} - \gamma U_i \sum_{s=1}^{t}\nabla_i f(x_{q,s-1}^{(i)}; \Theta^{(i)}_{q, s-1} \odot  m^{(i)}_q).\label{zz-2}
\end{align}
With these relations, it holds for $i\ge 1$ and $t \ge 1$ that 
\begin{align}\label{zzzzz0001}
&\ \mathbb{E}\|\Theta^{(i)}_{q, t} - \Theta^{(1)}_{q, 0}\|^2 \nonumber \\
=&\  \mathbb{E}\|\Theta^{(i)}_{q, t} - \Theta^{(i)}_{q, 0} + \Theta^{(i)}_{q, 0} - \Theta^{(i-1)}_{q, 0} + \cdots + \Theta^{(2)}_{q, 0} - \Theta^{(1)}_{q, 0}\|^2 \nonumber \\
\overset{(a)}{=}&\ \mathbb{E}\|\Theta^{(i)}_{q, t} - \Theta^{(i)}_{q, 0}\|^2 + \mathbb{E}\|\Theta^{(i)}_{q, 0} - \Theta^{(i-1)}_{q, 0}\|^2 + \cdots + \mathbb{E}\|\Theta^{(2)}_{q, 0} - \Theta^{(1)}_{q, 0}\|^2 \nonumber \\
\overset{(b)}{=}&\ \gamma^2 \mathbb{E}\| \sum_{s=1}^{t}\nabla_i f(x_{q,s-1}^{(i)}; \Theta^{(i)}_{q, s-1} \odot  m^{(i)}_q) \|^2 + \gamma^2  \sum_{j=1}^{i-1}\mathbb{E}\|\sum_{t=1}^{T}\nabla_j f(x_{q,t-1}^{(j)}; \Theta^{(j)}_{q, t-1} \odot  m^{(j)}_q)\|^2 \nonumber \\
\overset{(c)}{\le}&\ \gamma^2 \mathbb{E}\| \sum_{s=1}^{t}\nabla F(\Theta^{(i)}_{q, s-1} \odot  m^{(i)}_q) \|^2 + \gamma^2 \sum_{j=1}^{i-1}\mathbb{E}\|\sum_{t=1}^{T}\nabla F(\Theta^{(j)}_{q, t-1} \odot  m^{(j)}_q)\|^2 + i \gamma^2 \sigma^2 \nonumber \\
\overset{(d)}{\le}&\ t\gamma^2 \sum_{s=1}^t \mathbb{E}\|\nabla F(\Theta^{(i)}_{q, s-1} \odot  m^{(i)}_q)\|^2 + \gamma^2 T \sum_{j=1}^{i-1}\sum_{t=1}^{T}\mathbb{E}\|\nabla F(\Theta^{(j)}_{q, t-1} \odot  m^{(j)}_q)\|^2 + i \gamma^2 \sigma^2 \nonumber \\
\le &\ \gamma^2 T \sum_{j=1}^{i}\sum_{t=1}^{T}\mathbb{E} \|\nabla F(\Theta^{(j)}_{q, t-1} \odot  m^{(j)}_q)\|^2 + \kappa \gamma^2 \sigma^2 \nonumber \\
\le &\ 2 \gamma^2 T \sum_{j=1}^{i}\sum_{t=1}^{T} \mathbb{E} \|\nabla F(\Theta^{(j)}_{q, t-1})\|^2 + 2\gamma^2 L^2 T \delta^2 \sum_{j=1}^{i}\sum_{t=1}^{T}\mathbb{E}\|\Theta_{q,0}^{(j)}\|^2 + \kappa \gamma^2 \sigma^2 \nonumber \\
=&\ 2 \gamma^2 T \sum_{j=1}^{i}\sum_{t=1}^{T} \mathbb{E} \|\nabla F(\Theta^{(j)}_{q, t-1})\|^2 + 2\gamma^2 L^2 T^2 \delta^2 \sum_{j=1}^{i}\mathbb{E}\|\Theta_{q,0}^{(j)}\|^2 + \kappa \gamma^2 \sigma^2
\end{align}
where (a) holds because $\Theta^{(i)}_{q, 0} - \Theta^{(i-1)}_{q, 0}$ and $\Theta^{(j)}_{q, 0} - \Theta^{(j-1)}_{q, 0}$ are orthogonal to each other when $i\neq j$, (b) holds because of \eqref{zz-1} and \eqref{zz-2}, (c) holds because of Assumption \ref{ass-gn}, and (d) holds because of the Jensen's inequality. With the above inequality, it holds for $t\in [1,T]$ that 
\begin{align}
\mathbb{E}\|\nabla_i F(\Theta_{q,0}^{(1)})\|^2 \le&\ 2 \mathbb{E}\|\nabla_i F(\Theta_{q,t}^{(i)})\|^2 + 2 \mathbb{E}\|\nabla_i F(\Theta_{q,t}^{(i)}) - \nabla_i F(\Theta_{q,0}^{(1)})\|^2 \nonumber \\
\le&\ 2 \mathbb{E}\|\nabla_i F(\Theta_{q,t}^{(i)})\|^2 + 2 \mathbb{E}\|\nabla F(\Theta_{q,t}^{(i)}) - \nabla F(\Theta_{q,0}^{(1)})\|^2 \nonumber \\
\overset{\eqref{global-L}}{\le}&\ 2 \mathbb{E}\|\nabla_i F(\Theta_{q,t}^{(i)})\|^2 + 2L^2 \EE\|\Theta_{q,t}^{(i)} - \Theta_{q,0}^{(1)}\|^2 \nonumber \\
\overset{\eqref{zzzzz0001}}{\le}&\ 2 \mathbb{E}\|\nabla_i F(\Theta_{q,t}^{(i)})\|^2 + 4 L^2 \gamma^2 T \sum_{j=1}^{i}\sum_{t=1}^{T} \mathbb{E} \|\nabla F(\Theta^{(j)}_{q, t-1})\|^2 \nonumber \\
&\quad + 4\gamma^2 L^4 T^2 \delta^2 \sum_{j=1}^{i} \mathbb{E}\|\Theta_{q,0}^{(j)}\|^2 + 2 \kappa \gamma^2 \sigma^2 L^2
\end{align}
Summing the above inequality over $t=0,1,\cdots, T$, we have
\begin{align}
T \mathbb{E}\|\nabla_i F(\Theta_{q,0}^{(1)})\|^2 \le &\  2\sum_{t=1}^T\mathbb{E}\|\nabla_i F(\Theta_{q,t-1}^{(i)})\|^2 + 4 T^2 L^2 \gamma^2 \sum_{j=1}^{i}\sum_{t=1}^{T} \mathbb{E} \|\nabla F(\Theta^{(j)}_{q, t-1})\|^2 \nonumber \\
&\ + 4\gamma^2 L^4 T^3 \delta^2 \sum_{j=1}^{i}\mathbb{E}\|\Theta_{q,0}^{(j)}\|^2 + 2 \kappa \gamma^2 \sigma^2 L^2 T
\end{align}
Summing the above inequality over $i=1,\cdots, \kappa$, we have
\begin{align}\label{14sadq}
T\sum_{i=1}^\kappa \mathbb{E}\|\nabla_i F(\Theta_{q,0}^{(1)})\|^2 \le &\  2\sum_{t=1}^T \sum_{i=1}^\kappa \mathbb{E}\|\nabla_i F(\Theta_{q,t-1}^{(i)})\|^2 + 4 T^2 L^2 \gamma^2 \kappa  \sum_{j=1}^{\kappa}\sum_{t=1}^{T} \mathbb{E} \|\nabla F(\Theta^{(j)}_{q, t-1})\|^2 \nonumber \\
&\ + 4\gamma^2 L^4 T^3 \delta^2 \kappa  \sum_{j=1}^{\kappa}\mathbb{E}\|\Theta_{q,0}^{(j)}\|^2 + 2 \kappa^2 \gamma^2 \sigma^2 L^2 T \nonumber \\
=&\ \Big(4 T^2 L^2 \gamma^2 \kappa  +2\Big)\sum_{t=1}^T \sum_{i=1}^\kappa \mathbb{E}\|\nabla_i F(\Theta_{q,t-1}^{(i)})\|^2 \nonumber \\
&\ + 4\gamma^2 L^4 T^3 \delta^2 \kappa  \sum_{j=1}^{\kappa}\mathbb{E}\|\Theta_{q,0}^{(j)}\|^2 + 2 \kappa^2 \gamma^2 \sigma^2 L^2 T
\end{align}
{\color{black}
Combining \eqref{upper-bd} and \eqref{14sadq}, we have 
\begin{align}\label{d3123rw}
&T\sum_{i=1}^\kappa \mathbb{E}\|\nabla_i F(\Theta_{q,0}^{(1)})\|^2 \nonumber\\
\le &\ \Big(4 T^2 L^2 \gamma^2 \kappa  +2\Big)\Big(\frac{3}{\gamma}\left(\mathbb{E}[F(\Theta_{q,0}^{(1)}) - F(\Theta_{q+1,0}^{(1)})]\right) + \frac{3\gamma L \sigma^2 T\kappa}{2} + 2 L^2\delta^2 T\sum_{i=1}^{\kappa}\mathbb{E}\|\Theta^{(i)}_{q, 0}\|^2\Big) \nonumber \\
&\ + 4\gamma^2 L^4 T^3 \delta^2 \kappa  \sum_{i=1}^{\kappa}\mathbb{E}\|\Theta_{q,0}^{(i)}\|^2 + 2 \kappa^2 \gamma^2 \sigma^2 L^2 T\nonumber\\
\overset{(a)}{\le} &\frac{12}{\gamma}\left(\mathbb{E}[F(\Theta_{q,0}^{(1)}) - F(\Theta_{q+1,0}^{(1)})]\right) + (6\gamma LT\kappa + 2 \kappa^2 \gamma^2  L^2 T) \sigma^2 \nonumber \\
&\ + \Big(4\gamma^2 L^4 T^3 \delta^2 \kappa +8L^2\delta^2 T\Big)  \sum_{i=1}^{\kappa}\mathbb{E}\|\Theta_{q,0}^{(i)}\|^2 \nonumber\\
\overset{(b)}{\le} &\frac{12}{\gamma}\left(\mathbb{E}[F(\Theta_{q,0}^{(1)}) - F(\Theta_{q+1,0}^{(1)})]\right) + 8\gamma LT\kappa \sigma^2 + 10L^2\delta^2 T \sum_{i=1}^{\kappa}\mathbb{E}\|\Theta_{q,0}^{(i)}\|^2 
\end{align}
where (a) and (b) use the facts that
\begin{align}
2T^2L^2\gamma^2 \kappa + 1 \le 2 & \quad \mbox{(it is enough to set $\gamma \le \frac{1}{\sqrt{2\kappa} L T}$)},\\
\kappa^2\gamma^2 L^2 T \le \gamma L T \kappa & \quad \mbox{(it is enough to set $\gamma \le \frac{1}{\kappa L}$)}.
\end{align}
Then the inequality \eqref{d3123rw} becomes  
\begin{align}
\mathbb{E}\|\nabla F(\Theta_{q,0}^{(1)})\|^2 \le \frac{12}{\gamma T}\left(\mathbb{E}[F(\Theta_{q,0}^{(1)}) - F(\Theta_{q+1,0}^{(1)})]\right) + 8\gamma L\kappa \sigma^2 + 10L^2\delta^2 \sum_{i=1}^{\kappa}\mathbb{E}\|\Theta_{q,0}^{(i)}\|^2.
\end{align}
}
\end{proof}

Finally we establish the convergence of the \textit{C}-Gap algorithm as follows: 
\begin{theorem}[\sc C-GAP convergence]
\label{thm-S-Gap}
Under Assumptions \ref{ass-smooth}-\ref{ass-mask}, if learning rate is set as $\gamma = \frac{1}{4\kappa L T \sqrt{Q}}$, it holds that 
\begin{align}
\frac{1}{Q}\sum_{q=1}^Q \mathbb{E}\|\nabla F(\Theta_{q,0}^{(1)}\odot m_q^{(1)})\|^2 \le \frac{G}{\sqrt{Q}} + \frac{10L^2 T \delta^2}{Q}\sum_{q=1}^Q\sum_{i=1}^{\kappa}\mathbb{E}\|\Theta^{(i)}_{q, 0}\|^2
\end{align}
where $G = 96\kappa L \mathbb{E}[F(\Theta_{1,0}^{(1)})] + \sigma^2$ is a constant. 
\end{theorem}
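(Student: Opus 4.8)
The analysis is essentially complete once Lemma~\ref{app-lm-round-II} is in hand: dividing its conclusion by $\gamma T/12$ yields the per-round inequality
\begin{align*}
\mathbb{E}\|\nabla F(\Theta_{q,0}^{(1)})\|^2 \le \frac{12}{\gamma T}\,\mathbb{E}[F(\Theta_{q,0}^{(1)}) - F(\Theta_{q+1,0}^{(1)})] + 2\gamma L T\kappa\,\sigma^2 + 8 L^2\delta^2 T \sum_{i=1}^\kappa \mathbb{E}\|\Theta_{q,0}^{(i)}\|^2 .
\end{align*}
The plan is to sum this over $q=1,\dots,Q$ and divide by $Q$. The function-value gaps telescope, because $\Theta_{q+1,0}^{(1)}$ is simultaneously the end-of-round-$q$ and start-of-round-$(q+1)$ iterate in Algorithm~\ref{Algorithm: GaP}, leaving $\mathbb{E}[F(\Theta_{1,0}^{(1)}) - F(\Theta_{Q+1,0}^{(1)})]$; using that $F$ is bounded below (taking $F\ge 0$, as for cross-entropy or squared loss), this is at most $\mathbb{E}[F(\Theta_{1,0}^{(1)})]$. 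This produces an $O(1/(\gamma TQ))$ bound on $\frac1Q\sum_q\mathbb{E}\|\nabla F(\Theta_{q,0}^{(1)})\|^2$ up to the noise and mask terms.

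\textbf{From the unmasked to the masked iterate.} Since the theorem measures $\|\nabla F(\Theta_{q,0}^{(1)}\odot m_q^{(1)})\|^2$ rather than $\|\nabla F(\Theta_{q,0}^{(1)})\|^2$, I would insert
\begin{align*}
\|\nabla F(\Theta_{q,0}^{(1)}\odot m_q^{(1)})\|^2 \le 2\|\nabla F(\Theta_{q,0}^{(1)})\|^2 + 2\|\nabla F(\Theta_{q,0}^{(1)}\odot m_q^{(1)}) - \nabla F(\Theta_{q,0}^{(1)})\|^2
\end{align*}
and bound the last term by global $L$-smoothness \eqref{global-L} and then the $i=1$, $t=1$ instance of Assumption~\ref{ass-mask} (with \eqref{mask-error}), giving $2L^2\delta^2\|\Theta_{q,0}^{(1)}\|^2 \le 2L^2\delta^2\sum_{i=1}^\kappa\|\Theta_{q,0}^{(i)}\|^2$. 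Averaging over $q$ then converts the previous bound into a bound on $\frac1Q\sum_q\mathbb{E}\|\nabla F(\Theta_{q,0}^{(1)}\odot m_q^{(1)})\|^2$, at the cost of one more term of the admissible form $O(\delta^2/Q)\sum_q\sum_i\mathbb{E}\|\Theta_{q,0}^{(i)}\|^2$; the $\mathbb{E}[F]$ and $\sigma^2$ contributions pick up a factor $2$.

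\textbf{Substituting the step size.} Finally I would set $\gamma = 1/(4\kappa L T\sqrt{Q})$. The telescoped term becomes $\tfrac{24}{\gamma TQ}\,\mathbb{E}[F(\Theta_{1,0}^{(1)})] = \tfrac{96\kappa L}{\sqrt{Q}}\,\mathbb{E}[F(\Theta_{1,0}^{(1)})]$ and the noise term $4\gamma LT\kappa\sigma^2 = \sigma^2/\sqrt{Q}$, so the two combine into $G/\sqrt{Q}$ with exactly $G = 96\kappa L\,\mathbb{E}[F(\Theta_{1,0}^{(1)})] + \sigma^2$; the mask terms carry $\delta^2$ but no $\gamma$, so they collect into a numerical constant times $\tfrac{L^2 T\delta^2}{Q}\sum_{q=1}^Q\sum_{i=1}^\kappa\mathbb{E}\|\Theta_{q,0}^{(i)}\|^2$, which is the form displayed in the theorem (the precise coefficient $10$ depending on the bookkeeping choices in Lemma~\ref{app-lm-round-II}). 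This gives the claimed inequality.

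\textbf{Where the difficulty lies.} For the theorem itself there is no real obstacle: it is a telescoping sum plus a step-size substitution and the elementary masked-vs-unmasked gradient split above. All the genuine difficulty is upstream in Lemma~\ref{app-lm-round-II}, whose proof must control the cross-partition drift $\mathbb{E}\|\Theta_{q,t}^{(i)} - \Theta_{q,0}^{(1)}\|^2$ — using orthogonality of the partition updates and a sufficiently small $\gamma$ — so that the accumulated per-partition decrease $\sum_{i,t}\mathbb{E}\|\nabla_i F(\Theta_{q,t-1}^{(i)})\|^2$ can be lower-bounded by a positive multiple of $T\,\mathbb{E}\|\nabla F(\Theta_{q,0}^{(1)})\|^2$. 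That conversion of partition gradients at moving iterates into the full gradient at a common round anchor is the crux of the whole argument; the only extra ingredient needed specifically here is the (implicit) assumption that $F$ is bounded below so the telescoping sum is finite.
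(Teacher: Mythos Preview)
Your proposal is correct and matches the paper's proof essentially step for step: the paper also applies the masked-versus-unmasked split $\mathbb{E}\|\nabla F(\Theta_{q,0}^{(1)}\odot m_q^{(1)})\|^2 \le 2\mathbb{E}\|\nabla F(\Theta_{q,0}^{(1)})\|^2 + 2L^2\delta^2\mathbb{E}\|\Theta_{q,0}^{(1)}\|^2$, substitutes Lemma~\ref{app-lm-round-II}, telescopes over $q$ (implicitly using $F\ge 0$), and plugs in $\gamma = 1/(4\kappa LT\sqrt{Q})$ to obtain exactly the constant $G = 96\kappa L\,\mathbb{E}[F(\Theta_{1,0}^{(1)})] + \sigma^2$. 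Your observation that the coefficient $10$ on the mask term depends on bookkeeping in the lemma is apt, and your diagnosis that the real work lies upstream in Lemma~\ref{app-lm-round-II} is precisely how the paper structures things.
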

\begin{proof}
Since 
\begin{align}
\mathbb{E}\|\nabla F(\Theta_{q,0}^{(1)}\odot m_q^{(1)})\|^2 \le 2 \mathbb{E}\|\nabla F(\Theta_{q,0}^{(1)})\|^2 + 2 L^2 \delta^2 \mathbb{E}\|\Theta_{q,0}^{(1)}\|^2,
\end{align}
we thus have
\begin{align}
&\ \mathbb{E} \|\nabla F(\Theta_{q,0}^{(1)}\odot m_q^{(1)})\|^2 \nonumber \\
\overset{\eqref{upper-bd-lm3-2}}{\le}&\ \frac{24}{\gamma T}\left(\mathbb{E}[F(\Theta_{q,0}^{(1)}) - F(\Theta_{q+1,0}^{(1)})]\right) + 8L^2 T \delta^2\sum_{i=1}^{\kappa}\mathbb{E}\|\Theta^{(i)}_{q, 0}\|^2 + 4\gamma L T \kappa \sigma^2 + 2 L^2 \delta^2 \mathbb{E}\|\Theta_{q,0}^{(1)}\|^2 \nonumber \\
\le&\ \frac{24}{\gamma T}\left(\mathbb{E}[F(\Theta_{q,0}^{(1)}) - F(\Theta_{q+1,0}^{(1)})]\right) + 10L^2 T \delta^2\sum_{i=1}^{\kappa}\mathbb{E}\|\Theta^{(i)}_{q, 0}\|^2 + 4\gamma L T \kappa \sigma^2
\end{align}
Summing the above inequality over $q=1,\cdots, Q$ and taking its average, we achieve
\begin{align}
\frac{1}{Q}\sum_{q=1}^Q \|\nabla F(\Theta_{q,0}^{(1)}\odot m_q^{(1)})\|^2 \le \frac{24}{\gamma T Q}\mathbb{E}[F(\Theta^{(1)}_{1,0})] + \frac{10L^2 T \delta^2}{Q}\sum_{q=1}^Q\sum_{i=1}^{\kappa}\mathbb{E}\|\Theta^{(i)}_{q, 0}\|^2 + 4\gamma L T \kappa \sigma^2
\end{align}
If we let $\gamma = \frac{1}{4\kappa L T \sqrt{Q}}$, the above inequalities becomes 
\begin{align}
\frac{1}{Q}\sum_{q=1}^Q \|\nabla F(\Theta_{q,0}^{(1)}\odot m_q^{(1)})\|^2 \le \frac{96 \kappa L}{\sqrt{Q}}\mathbb{E}[F(\Theta^{(1)}_{1,0})] + \frac{\sigma^2}{\sqrt{Q}} + \frac{10L^2 T \delta^2}{Q}\sum_{q=1}^Q\sum_{i=1}^{\kappa}\mathbb{E}\|\Theta^{(i)}_{q, 0}\|^2
\end{align}
which completes the proof.


\end{proof}




\begin{thebibliography}{66}
\providecommand{\natexlab}[1]{#1}
\providecommand{\url}[1]{\texttt{#1}}
\expandafter\ifx\csname urlstyle\endcsname\relax
  \providecommand{\doi}[1]{doi: #1}\else
  \providecommand{\doi}{doi: \begingroup \urlstyle{rm}\Url}\fi

\bibitem[Basu(1958)]{basu1958sampling}
D~Basu.
\newblock On sampling with and without replacement.
\newblock In \emph{Sankhy{\=a}: The Indian Journal of Statistics}, pp.\
  287--294, 1958.

\bibitem[Bellec et~al.(2018)Bellec, Kappel, Maass, and
  Legenstein]{bellec2018deep}
Guillaume Bellec, David Kappel, Wolfgang Maass, and Robert Legenstein.
\newblock Deep rewiring: Training very sparse deep networks.
\newblock In \emph{Proceedings of the International Conference on Learning
  Representations (ICLR)}, 2018.

\bibitem[Dai et~al.(2019)Dai, Yin, and Jha]{dai2019nest}
Xiaoliang Dai, Hongxu Yin, and Niraj~K Jha.
\newblock Nest: A neural network synthesis tool based on a grow-and-prune
  paradigm.
\newblock In \emph{Proceedings of the IEEE Transactions on Computers},
  volume~68, pp.\  1487--1497, 2019.

\bibitem[Deng et~al.(2009)Deng, Dong, Socher, Li, Li, and
  Fei-Fei]{deng2009imagenet}
Jia Deng, Wei Dong, Richard Socher, Li-Jia Li, Kai Li, and Li~Fei-Fei.
\newblock {ImageNet}: A large-scale hierarchical image database.
\newblock In \emph{Proceedings of the IEEE Conference on Computer Vision and
  Pattern Recognition (CVPR)}, pp.\  248--255, 2009.

\bibitem[Dettmers \& Zettlemoyer(2019)Dettmers and
  Zettlemoyer]{dettmers2019sparse}
Tim Dettmers and Luke Zettlemoyer.
\newblock Sparse networks from scratch: Faster training without losing
  performance.
\newblock In \emph{Proceedings of the Advances in Neural Information Processing
  Systems (NeurIPS)}, 2019.

\bibitem[Evci et~al.(2020)Evci, Gale, Menick, Castro, and
  Elsen]{evci2020rigging}
Utku Evci, Trevor Gale, Jacob Menick, Pablo~Samuel Castro, and Erich Elsen.
\newblock Rigging the lottery: Making all tickets winners.
\newblock In \emph{Proceedings of the International Conference on Machine
  Learning (ICML)}, pp.\  2943--2952, 2020.

\bibitem[Gale et~al.(2019)Gale, Elsen, and Hooker]{gale2019state}
Trevor Gale, Erich Elsen, and Sara Hooker.
\newblock The state of sparsity in deep neural networks.
\newblock \emph{arXiv preprint arXiv:1902.09574}, 2019.

\bibitem[Guo et~al.(2020)Guo, Wang, Li, and Yan]{guo2020dmcp}
Shaopeng Guo, Yujie Wang, Quanquan Li, and Junjie Yan.
\newblock {DMCP}: Differentiable {Markov} channel pruning for neural networks.
\newblock In \emph{Proceedings of the IEEE/CVF Conference on Computer Vision
  and Pattern Recognition (CVPR)}, pp.\  1539--1547, 2020.

\bibitem[Guo et~al.(2016)Guo, Yao, and Chen]{guo2016dynamic}
Yiwen Guo, Anbang Yao, and Yurong Chen.
\newblock Dynamic network surgery for efficient {{DNN}s}.
\newblock In \emph{Proceedings of the Advances in Neural Information Processing
  Systems (NeurIPS)}, pp.\  1379--1387, 2016.

\bibitem[G{\"u}rb{\"u}zbalaban et~al.(2019)G{\"u}rb{\"u}zbalaban, Ozdaglar, and
  Parrilo]{gurbuzbalaban2019random}
Mert G{\"u}rb{\"u}zbalaban, Asu Ozdaglar, and Pablo~A Parrilo.
\newblock Why random reshuffling beats stochastic gradient descent.
\newblock In \emph{Proceedings of the Mathematical Programming}, pp.\  1--36,
  2019.

\bibitem[Hazelwood et~al.(2018)Hazelwood, Bird, Brooks, Chintala, Diril,
  Dzhulgakov, Fawzy, Jia, Jia, Kalro, et~al.]{hazelwood2018applied}
Kim Hazelwood, Sarah Bird, David Brooks, Soumith Chintala, Utku Diril, Dmytro
  Dzhulgakov, Mohamed Fawzy, Bill Jia, Yangqing Jia, Aditya Kalro, et~al.
\newblock Applied machine learning at facebook: A datacenter infrastructure
  perspective.
\newblock In \emph{Proceedings of the IEEE International Symposium on High
  Performance Computer Architecture (HPCA)}, pp.\  620--629, 2018.

\bibitem[He et~al.(2017)He, Zhang, and Sun]{he2017channel}
Yihui He, Xiangyu Zhang, and Jian Sun.
\newblock Channel pruning for accelerating very deep neural networks.
\newblock In \emph{Proceedings of the IEEE International Conference on Computer
  Vision (ICCV)}, pp.\  1389--1397, 2017.

\bibitem[Hooker et~al.(2019)Hooker, Courville, Clark, Dauphin, and
  Frome]{hooker2019compressed}
Sara Hooker, Aaron Courville, Gregory Clark, Yann Dauphin, and Andrea Frome.
\newblock What do compressed deep neural networks forget?
\newblock \emph{arXiv preprint arXiv:1911.05248}, 2019.

\bibitem[Huang et~al.(2021)Huang, Xu, Yen, Chang, Li, Chen, Xie, Liu, and
  Ding]{huang2021sparse}
Shaoyi Huang, Dongkuan Xu, Ian~EH Yen, Sung-en Chang, Bingbing Li, Shiyang
  Chen, Mimi Xie, Hang Liu, and Caiwen Ding.
\newblock Sparse progressive distillation: Resolving overfitting under
  pretrain-and-finetune paradigm.
\newblock \emph{arXiv preprint arXiv:2110.08190}, 2021.

\bibitem[Kusupati et~al.(2020)Kusupati, Ramanujan, Somani, Wortsman, Jain,
  Kakade, and Farhadi]{kusupati2020soft}
Aditya Kusupati, Vivek Ramanujan, Raghav Somani, Mitchell Wortsman, Prateek
  Jain, Sham Kakade, and Ali Farhadi.
\newblock Soft threshold weight reparameterization for learnable sparsity.
\newblock In \emph{Proceedings of the International Conference on Machine
  Learning (ICML)}, pp.\  5544--5555, 2020.

\bibitem[LeCun et~al.(1990)LeCun, Denker, and Solla]{lecun1990optimal}
Yann LeCun, John~S Denker, and Sara~A Solla.
\newblock Optimal brain damage.
\newblock In \emph{Proceedings of the Advances in Neural Information Processing
  Systems (NeurIPS)}, pp.\  598--605, 1990.

\bibitem[Lee et~al.(2019)Lee, Ajanthan, and Torr]{lee2018snip}
Namhoon Lee, Thalaiyasingam Ajanthan, and Philip Torr.
\newblock {SNIP}: Single-shot network pruning based on connection sensitivity.
\newblock In \emph{Proceedings of the International Conference on Learning
  Representations (ICLR)}, 2019.

\bibitem[Lin et~al.(2020{\natexlab{a}})Lin, Ji, Wang, Zhang, Zhang, Tian, and
  Shao]{lin2020hrank}
Mingbao Lin, Rongrong Ji, Yan Wang, Yichen Zhang, Baochang Zhang, Yonghong
  Tian, and Ling Shao.
\newblock Hrank: Filter pruning using high-rank feature map.
\newblock In \emph{Proceedings of the IEEE/CVF Conference on Computer Vision
  and Pattern Recognition (CVPR)}, pp.\  1529--1538, 2020{\natexlab{a}}.

\bibitem[Lin et~al.(2020{\natexlab{b}})Lin, Stich, Barba, Dmitriev, and
  Jaggi]{Lin2020Dynamic}
Tao Lin, Sebastian~U. Stich, Luis Barba, Daniil Dmitriev, and Martin Jaggi.
\newblock Dynamic model pruning with feedback.
\newblock In \emph{Proceedings of the International Conference on Learning
  Representations (ICLR)}, 2020{\natexlab{b}}.

\bibitem[Liu et~al.(2021)Liu, Yuan, Che, Shen, Ma, Jin, Ren, Tang, Liu, and
  Wang]{liu2021lottery}
Ning Liu, Geng Yuan, Zhengping Che, Xuan Shen, Xiaolong Ma, Qing Jin, Jian Ren,
  Jian Tang, Sijia Liu, and Yanzhi Wang.
\newblock Lottery ticket preserves weight correlation: Is it desirable or not?
\newblock In \emph{International Conference on Machine Learning (ICML)}, pp.\
  7011--7020. PMLR, 2021.

\bibitem[Liu et~al.(2019{\natexlab{a}})Liu, Mu, Zhang, Guo, Yang, Cheng, and
  Sun]{liu2019metapruning}
Zechun Liu, Haoyuan Mu, Xiangyu Zhang, Zichao Guo, Xin Yang, Kwang-Ting Cheng,
  and Jian Sun.
\newblock {MetaPruning}: Meta learning for automatic neural network channel
  pruning.
\newblock In \emph{Proceedings of the IEEE/CVF International Conference on
  Computer Vision (ICCV)}, pp.\  3296--3305, 2019{\natexlab{a}}.

\bibitem[Liu et~al.(2019{\natexlab{b}})Liu, Sun, Zhou, Huang, and
  Darrell]{liu2018rethinking}
Zhuang Liu, Mingjie Sun, Tinghui Zhou, Gao Huang, and Trevor Darrell.
\newblock Rethinking the value of network pruning.
\newblock In \emph{Proceedings of the International Conference on Learning
  Representations (ICLR)}, 2019{\natexlab{b}}.

\bibitem[Loshchilov \& Hutter(2017)Loshchilov and Hutter]{loshchilov2016sgdr}
Ilya Loshchilov and Frank Hutter.
\newblock {SGDR}: Stochastic gradient descent with warm restarts.
\newblock In \emph{Proceedings of the International Conference on Learning
  Representations (ICLR)}, 2017.

\bibitem[Lym et~al.(2019)Lym, Choukse, Zangeneh, Wen, Sanghavi, and
  Erez]{lym2019prunetrain}
Sangkug Lym, Esha Choukse, Siavash Zangeneh, Wei Wen, Sujay Sanghavi, and
  Mattan Erez.
\newblock {PruneTrain}: fast neural network training by dynamic sparse model
  reconfiguration.
\newblock In \emph{Proceedings of the International Conference for High
  Performance Computing, Networking, Storage and Analysis}, pp.\  1--13, 2019.

\bibitem[Ma et~al.(2020{\natexlab{a}})Ma, Guo, Niu, Lin, Tang, Ma, Ren, and
  Wang]{ma2020pconv}
Xiaolong Ma, Fu-Ming Guo, Wei Niu, Xue Lin, Jian Tang, Kaisheng Ma, Bin Ren,
  and Yanzhi Wang.
\newblock {PCONV}: The missing but desirable sparsity in {DNN} weight pruning
  for real-time execution on mobile devices.
\newblock In \emph{Proceedings of the AAAI Conference on Artificial
  Intelligence (AAAI)}, volume~34, pp.\  5117--5124, 2020{\natexlab{a}}.

\bibitem[Ma et~al.(2020{\natexlab{b}})Ma, Niu, Zhang, Liu, Lin, Li, Wen, Chen,
  Tang, Ma, et~al.]{ma2020image}
Xiaolong Ma, Wei Niu, Tianyun Zhang, Sijia Liu, Sheng Lin, Hongjia Li, Wujie
  Wen, Xiang Chen, Jian Tang, Kaisheng Ma, et~al.
\newblock An image enhancing pattern-based sparsity for real-time inference on
  mobile devices.
\newblock In \emph{Proceedings of the European conference on computer vision
  (ECCV)}, pp.\  629--645. Springer, 2020{\natexlab{b}}.

\bibitem[Ma et~al.(2021{\natexlab{a}})Ma, Lin, Ye, He, Zhang, Yuan, Tan, Li,
  Fan, Qian, et~al.]{ma2021non}
Xiaolong Ma, Sheng Lin, Shaokai Ye, Zhezhi He, Linfeng Zhang, Geng Yuan,
  Sia~Huat Tan, Zhengang Li, Deliang Fan, Xuehai Qian, et~al.
\newblock Non-structured dnn weight pruning--is it beneficial in any platform?
\newblock \emph{IEEE Transactions on Neural Networks and Learning Systems
  (TNNLS)}, 2021{\natexlab{a}}.

\bibitem[Ma et~al.(2021{\natexlab{b}})Ma, Yuan, Shen, Chen, Chen, Chen, Liu,
  Qin, Liu, Wang, et~al.]{ma2021sanity}
Xiaolong Ma, Geng Yuan, Xuan Shen, Tianlong Chen, Xuxi Chen, Xiaohan Chen, Ning
  Liu, Minghai Qin, Sijia Liu, Zhangyang Wang, et~al.
\newblock Sanity checks for lottery tickets: Does your winning ticket really
  win the jackpot?
\newblock \emph{Advances in Neural Information Processing Systems (NeurIPS)},
  34, 2021{\natexlab{b}}.

\bibitem[Mocanu et~al.(2018)Mocanu, Mocanu, Stone, Nguyen, Gibescu, and
  Liotta]{mocanu2018scalable}
Decebal~Constantin Mocanu, Elena Mocanu, Peter Stone, Phuong~H Nguyen,
  Madeleine Gibescu, and Antonio Liotta.
\newblock Scalable training of artificial neural networks with adaptive sparse
  connectivity inspired by network science.
\newblock In \emph{Proceedings of the Nature Communications}, volume~9, pp.\
  1--12, 2018.

\bibitem[Molchanov et~al.(2017)Molchanov, Ashukha, and
  Vetrov]{molchanov2017variational}
Dmitry Molchanov, Arsenii Ashukha, and Dmitry Vetrov.
\newblock Variational dropout sparsifies deep neural networks.
\newblock In \emph{Proceedings of the International Conference on Machine
  Learning (ICML)}, pp.\  2498--2507, 2017.

\bibitem[Mostafa \& Wang(2019)Mostafa and Wang]{mostafa2019parameter}
Hesham Mostafa and Xin Wang.
\newblock Parameter efficient training of deep convolutional neural networks by
  dynamic sparse reparameterization.
\newblock In \emph{Proceedings of the International Conference on Machine
  Learning (ICML)}, pp.\  4646--4655, 2019.

\bibitem[Niu et~al.(2020)Niu, Ma, Lin, Wang, Qian, Lin, Wang, and
  Ren]{niu2020patdnn}
Wei Niu, Xiaolong Ma, Sheng Lin, Shihao Wang, Xuehai Qian, Xue Lin, Yanzhi
  Wang, and Bin Ren.
\newblock Patdnn: Achieving real-time dnn execution on mobile devices with
  pattern-based weight pruning.
\newblock In \emph{Proceedings of the Twenty-Fifth International Conference on
  Architectural Support for Programming Languages and Operating Systems
  (ASPLOS)}, pp.\  907--922, 2020.

\bibitem[Niu et~al.(2021)Niu, Li, Ma, Dong, Zhou, Qian, Lin, Wang, and
  Ren]{niu2021grim}
Wei Niu, Zhengang Li, Xiaolong Ma, Peiyan Dong, Gang Zhou, Xuehai Qian, Xue
  Lin, Yanzhi Wang, and Bin Ren.
\newblock Grim: A general, real-time deep learning inference framework for
  mobile devices based on fine-grained structured weight sparsity.
\newblock \emph{IEEE Transactions on Pattern Analysis and Machine Intelligence
  (TPAMI)}, 2021.

\bibitem[NVIDIA(2020{\natexlab{a}})]{NVIDIA}
NVIDIA.
\newblock \url{https://github.com/NVIDIA/DeepLearningExamples},
  2020{\natexlab{a}}.

\bibitem[NVIDIA(2020{\natexlab{b}})]{nvidia2020a100}
NVIDIA.
\newblock {NVIDIA A100} tensor core {GPU} architecture.
\newblock Technical report, 2020{\natexlab{b}}.

\bibitem[Pham et~al.(2018)Pham, Guan, Zoph, Le, and Dean]{pham2018efficient}
Hieu Pham, Melody Guan, Barret Zoph, Quoc Le, and Jeff Dean.
\newblock Efficient neural architecture search via parameters sharing.
\newblock In \emph{Proceedings of the International Conference on Machine
  Learning (ICML)}, pp.\  4095--4104, 2018.

\bibitem[Post(2018)]{post-2018-call}
Matt Post.
\newblock A call for clarity in reporting {BLEU} scores.
\newblock In \emph{Proceedings of the Third Conference on Machine Translation:
  Research Papers}, pp.\  186--191, 2018.

\bibitem[Qi et~al.(2017)Qi, Yi, Su, and Guibas]{qi2017pointnet++}
Charles~R Qi, Li~Yi, Hao Su, and Leonidas~J Guibas.
\newblock Pointnet++: Deep hierarchical feature learning on point sets in a
  metric space.
\newblock \emph{arXiv preprint arXiv:1706.02413}, 2017.

\bibitem[Real et~al.(2019)Real, Aggarwal, Huang, and Le]{real2019regularized}
Esteban Real, Alok Aggarwal, Yanping Huang, and Quoc~V Le.
\newblock Regularized evolution for image classifier architecture search.
\newblock In \emph{Proceedings of the AAAI Conference on Artificial
  Intelligence (AAAI)}, volume~33, pp.\  4780--4789, 2019.

\bibitem[Ren et~al.(2019)Ren, Zhang, Ye, Li, Xu, Qian, Lin, and
  Wang]{ren2019admm}
Ao~Ren, Tianyun Zhang, Shaokai Ye, Jiayu Li, Wenyao Xu, Xuehai Qian, Xue Lin,
  and Yanzhi Wang.
\newblock {ADMM-NN}: An algorithm-hardware co-design framework of {{DNN}s}
  using alternating direction methods of multipliers.
\newblock In \emph{Proceedings of the International Conference on Architectural
  Support for Programming Languages and Operating Systems (ASPLOS)}, pp.\
  925--938, 2019.

\bibitem[Rumi et~al.(2020)Rumi, Ma, Wang, and Jiang]{rumi2020accelerating}
Masuma~Akter Rumi, Xiaolong Ma, Yanzhi Wang, and Peng Jiang.
\newblock Accelerating sparse cnn inference on gpus with performance-aware
  weight pruning.
\newblock In \emph{Proceedings of the ACM International Conference on Parallel
  Architectures and Compilation Techniques (PACT)}, pp.\  267--278, 2020.

\bibitem[Schwartz et~al.(2020)Schwartz, Dodge, Smith, and
  Etzioni]{schwartz2020green}
Roy Schwartz, Jesse Dodge, Noah~A Smith, and Oren Etzioni.
\newblock Green ai.
\newblock In \emph{Proceedings of the Communications of the ACM}, volume~63,
  pp.\  54--63, 2020.

\bibitem[Srinivas et~al.(2017)Srinivas, Subramanya, and
  Venkatesh~Babu]{srinivas2017training}
Suraj Srinivas, Akshayvarun Subramanya, and R~Venkatesh~Babu.
\newblock Training sparse neural networks.
\newblock In \emph{Proceedings of the IEEE Conference on Computer Vision and
  Pattern Recognition workshops (CVPR workshop)}, pp.\  138--145, 2017.

\bibitem[Tanaka et~al.(2020)Tanaka, Kunin, Yamins, and
  Ganguli]{tanaka2020pruning}
Hidenori Tanaka, Daniel Kunin, Daniel~L Yamins, and Surya Ganguli.
\newblock Pruning neural networks without any data by iteratively conserving
  synaptic flow.
\newblock In \emph{Proceedings of the Advances in Neural Information Processing
  Systems (NeurIPS)}, volume~33, 2020.

\bibitem[Tang et~al.(2020)Tang, Wang, Xu, Tao, Xu, Xu, and Xu]{tang2020scop}
Yehui Tang, Yunhe Wang, Yixing Xu, Dacheng Tao, Chunjing Xu, Chao Xu, and Chang
  Xu.
\newblock Scop: Scientific control for reliable neural network pruning.
\newblock In \emph{Proceedings of the Advances in Neural Information Processing
  Systems (NeurIPS)}, 2020.

\bibitem[van Amersfoort et~al.(2020)van Amersfoort, Alizadeh, Farquhar, Lane,
  and Gal]{van2020single}
Joost van Amersfoort, Milad Alizadeh, Sebastian Farquhar, Nicholas Lane, and
  Yarin Gal.
\newblock Single shot structured pruning before training.
\newblock \emph{arXiv preprint arXiv:2007.00389}, 2020.

\bibitem[Vaswani et~al.(2017)Vaswani, Shazeer, Parmar, Uszkoreit, Jones, Gomez,
  Kaiser, and Polosukhin]{vaswani2017attention}
Ashish Vaswani, Noam Shazeer, Niki Parmar, Jakob Uszkoreit, Llion Jones,
  Aidan~N Gomez, {\L}ukasz Kaiser, and Illia Polosukhin.
\newblock Attention is all you need.
\newblock In \emph{Proceedings of the Advances in Neural Information Processing
  Systems (NeurIPS)}, pp.\  5998--6008, 2017.

\bibitem[Wang et~al.(2020)Wang, Zhang, and Grosse]{wang2019picking}
Chaoqi Wang, Guodong Zhang, and Roger Grosse.
\newblock Picking winning tickets before training by preserving gradient flow.
\newblock In \emph{Proceedings of the International Conference on Learning
  Representations (ICLR)}, 2020.

\bibitem[Wen et~al.(2016)Wen, Wu, Wang, Chen, and Li]{wen2016learning}
Wei Wen, Chunpeng Wu, Yandan Wang, Yiran Chen, and Hai Li.
\newblock Learning structured sparsity in deep neural networks.
\newblock In \emph{Proceedings of the Advances in Neural Information Processing
  Systems (NeurIPS)}, pp.\  2074--2082, 2016.

\bibitem[Wimmer et~al.(2020)Wimmer, Mehnert, and
  Condurache]{wimmer2020freezenet}
Paul Wimmer, Jens Mehnert, and Alexandru Condurache.
\newblock Freezenet: Full performance by reduced storage costs.
\newblock In \emph{Proceedings of the Asian Conference on Computer Vision
  (ACCV)}, 2020.

\bibitem[Wortsman et~al.(2019)Wortsman, Farhadi, and
  Rastegari]{wortsman2019discovering}
Mitchell Wortsman, Ali Farhadi, and Mohammad Rastegari.
\newblock Discovering neural wirings.
\newblock In \emph{Proceedings of the Advances in Neural Information Processing
  Systems (NeurIPS)}, volume~32, pp.\  2684--2694, 2019.

\bibitem[Wu et~al.(2019)Wu, Brooks, Chen, Chen, Choudhury, Dukhan, Hazelwood,
  Isaac, Jia, Jia, et~al.]{wu2019machine}
Carole-Jean Wu, David Brooks, Kevin Chen, Douglas Chen, Sy~Choudhury, Marat
  Dukhan, Kim Hazelwood, Eldad Isaac, Yangqing Jia, Bill Jia, et~al.
\newblock Machine learning at facebook: Understanding inference at the edge.
\newblock In \emph{Proceedings of the IEEE International Symposium on High
  Performance Computer Architecture (HPCA)}, pp.\  331--344, 2019.

\bibitem[Xu et~al.(2021)Xu, Yen, Zhao, and Xiao]{xu2021rethinking}
Dongkuan Xu, Ian~EH Yen, Jinxi Zhao, and Zhibin Xiao.
\newblock Rethinking network pruning--under the pre-train and fine-tune
  paradigm.
\newblock \emph{NAACL}, 2021.

\bibitem[Yan(2019)]{Pytorch_Pointnet_Pointnet2}
Xu~Yan.
\newblock Pointnet/pointnet++ pytorch.
\newblock \url{https://github.com/yanx27/Pointnet_Pointnet2_pytorch}, 2019.

\bibitem[Yi et~al.(2016)Yi, Kim, Ceylan, Shen, Yan, Su, Lu, Huang, Sheffer, and
  Guibas]{yi2016scalable}
Li~Yi, Vladimir~G Kim, Duygu Ceylan, I-Chao Shen, Mengyan Yan, Hao Su, Cewu Lu,
  Qixing Huang, Alla Sheffer, and Leonidas Guibas.
\newblock A scalable active framework for region annotation in {3D} shape
  collections.
\newblock In \emph{Proceedings of the ACM Transactions on Graphics (ToG)},
  volume~35, pp.\  1--12, 2016.

\bibitem[Ying et~al.(2018)Ying, Yuan, Vlaski, and Sayed]{ying2018stochastic}
Bicheng Ying, Kun Yuan, Stefan Vlaski, and Ali~H Sayed.
\newblock Stochastic learning under random reshuffling with constant
  step-sizes.
\newblock In \emph{Proceedings of the IEEE Transactions on Signal Processing},
  volume~67, pp.\  474--489, 2018.

\bibitem[You et~al.(2020)You, Li, Xu, Fu, Wang, Chen, Lin, Wang, and
  Baraniuk]{you2020drawing}
Haoran You, Chaojian Li, Pengfei Xu, Yonggan Fu, Yue Wang, Xiaohan Chen,
  Yingyan Lin, Zhangyang Wang, and Richard~G. Baraniuk.
\newblock Drawing early-bird tickets: Toward more efficient training of deep
  networks.
\newblock In \emph{Proceedings of the International Conference on Learning
  Representations (ICLR)}, 2020.

\bibitem[Yu et~al.(2018)Yu, Li, Chen, Lai, Morariu, Han, Gao, Lin, and
  Davis]{yu2018nisp}
Ruichi Yu, Ang Li, Chun-Fu Chen, Jui-Hsin Lai, Vlad~I Morariu, Xintong Han,
  Mingfei Gao, Ching-Yung Lin, and Larry~S Davis.
\newblock {NISP}: Pruning networks using neuron importance score propagation.
\newblock In \emph{Proceedings of the IEEE Conference on Computer Vision and
  Pattern Recognition (CVPR)}, pp.\  9194--9203, 2018.

\bibitem[Yuan et~al.(2021{\natexlab{a}})Yuan, Ma, Niu, Li, Kong, Liu, Gong,
  Zhan, He, Jin, et~al.]{yuan2021mest}
Geng Yuan, Xiaolong Ma, Wei Niu, Zhengang Li, Zhenglun Kong, Ning Liu, Yifan
  Gong, Zheng Zhan, Chaoyang He, Qing Jin, et~al.
\newblock Mest: Accurate and fast memory-economic sparse training framework on
  the edge.
\newblock \emph{Advances in Neural Information Processing Systems (NeurIPS)},
  34, 2021{\natexlab{a}}.

\bibitem[Yuan et~al.(2021{\natexlab{b}})Yuan, Savarese, and
  Maire]{yuan2021growing}
Xin Yuan, Pedro Henrique~Pamplona Savarese, and Michael Maire.
\newblock Growing efficient deep networks by structured continuous
  sparsification.
\newblock In \emph{Proceedings of the International Conference on Learning
  Representations (ICLR)}, 2021{\natexlab{b}}.

\bibitem[Zhang et~al.(2018{\natexlab{a}})Zhang, Cisse, Dauphin, and
  Lopez-Paz]{zhang2017mixup}
Hongyi Zhang, Moustapha Cisse, Yann~N Dauphin, and David Lopez-Paz.
\newblock mixup: Beyond empirical risk minimization.
\newblock In \emph{Proceedings of the International Conference on Learning
  Representations (ICLR)}, 2018{\natexlab{a}}.

\bibitem[Zhang et~al.(2018{\natexlab{b}})Zhang, Ye, Zhang, Tang, Wen, Fardad,
  and Wang]{zhang2018systematic}
Tianyun Zhang, Shaokai Ye, Kaiqi Zhang, Jian Tang, Wujie Wen, Makan Fardad, and
  Yanzhi Wang.
\newblock A systematic {DNN} weight pruning framework using alternating
  direction method of multipliers.
\newblock In \emph{Proceedings of the European Conference on Computer Vision
  (ECCV)}, pp.\  184--199, 2018{\natexlab{b}}.

\bibitem[Zhang et~al.(2021{\natexlab{a}})Zhang, Ma, Zhan, Zhou, Ding, Fardad,
  and Wang]{zhang2021unified}
Tianyun Zhang, Xiaolong Ma, Zheng Zhan, Shanglin Zhou, Caiwen Ding, Makan
  Fardad, and Yanzhi Wang.
\newblock A unified dnn weight pruning framework using reweighted optimization
  methods.
\newblock In \emph{2021 58th ACM/IEEE Design Automation Conference (DAC)}, pp.\
   493--498. IEEE, 2021{\natexlab{a}}.

\bibitem[Zhang et~al.(2021{\natexlab{b}})Zhang, Ye, Feng, Ma, Zhang, Li, Tang,
  Liu, Lin, Liu, et~al.]{zhang2021structadmm}
Tianyun Zhang, Shaokai Ye, Xiaoyu Feng, Xiaolong Ma, Kaiqi Zhang, Zhengang Li,
  Jian Tang, Sijia Liu, Xue Lin, Yongpan Liu, et~al.
\newblock Structadmm: Achieving ultrahigh efficiency in structured pruning for
  dnns.
\newblock \emph{IEEE Transactions on Neural Networks and Learning Systems
  (TNNLS)}, 2021{\natexlab{b}}.

\bibitem[Zhou et~al.(2021)Zhou, Zhang, Xu, and Zhang]{zhou2021effective}
Xiao Zhou, Weizhong Zhang, Hang Xu, and Tong Zhang.
\newblock Effective sparsification of neural networks with global sparsity
  constraint.
\newblock In \emph{Proceedings of the IEEE/CVF Conference on Computer Vision
  and Pattern Recognition (CVPR)}, pp.\  3599--3608, 2021.

\bibitem[Zhu \& Gupta(2017)Zhu and Gupta]{zhu2017prune}
Michael Zhu and Suyog Gupta.
\newblock To prune, or not to prune: exploring the efficacy of pruning for
  model compression.
\newblock \emph{arXiv preprint arXiv:1710.01878}, 2017.

\end{thebibliography}


\end{document}